\definecolor{red}{rgb}{0.738,0,0}
\newcommand{\rr}{{\mathbb R}}
\newcommand{\ba}[1]{\begin{array}{#1}}
\newcommand{\ea}{\end{array}}
\newcommand{\XX}{\mathcal{X}}
\newcommand{\st}{\mathop{\rm s.t.}\nolimits}
	\newtheorem{lemma}{Lemma}
\newcommand{\diag}{\mathop{\rm diag}\nolimits}
\newcommand{\pref}{\pi}
\newcommand{\Ss}[1]{\mathcal{S}_{#1}}
\definecolor{blue}{rgb}{0,0,0.738}
\newcommand{\prob}{\Pr}
\newtheorem{theorem}{Theorem} 
\newenvironment*{proof}[1]{\textbf{\emph{Proof}} }{}
\newcommand{\matrice}[2]{\left[\hspace*{-.1cm}\ba{#1} #2 \ea\hspace*{-.1cm}\right]}
\begin{document}

\title{Active preference learning based on radial basis functions}

\author{Alberto Bemporad$^\dagger$ \and Dario Piga$^\star$}
\date{{\small
    $^\dagger$IMT School for Advanced Studies Lucca, Italy \\Email: \texttt{alberto.bemporad@imtlucca.it}\\\vspace*{1em}
    $^\star$Dalle Molle Institute for Artificial Intelligence - USI/SUPSI, Manno, Switzerland \\
    Email: \texttt{dario.piga@supsi.ch}}\\[3em]%
    \today
}

\maketitle

\begin{abstract}
This paper proposes a method for solving optimization problems in which the decision-maker 
cannot evaluate the objective function, but rather can only express a 
\emph{preference} such as ``this is better than that''
between two candidate decision vectors.
The algorithm described in this paper aims at reaching the global optimizer
by iteratively proposing the decision maker a new comparison to make, 
based on actively learning  a surrogate of the latent (unknown and perhaps unquantifiable) objective function
from past sampled decision vectors and pairwise preferences. 
The surrogate is fit by means of radial basis functions,
under the constraint of satisfying, if possible, the preferences expressed by the decision
maker on existing samples. The surrogate is used to propose a new sample of the decision vector
for comparison with the current best candidate based on two possible
criteria: minimize a combination of the surrogate
and an inverse weighting distance function to balance between exploitation
of the surrogate and exploration of the decision space, or 
maximize a function related to the probability that the new candidate will be preferred. Compared to active preference learning
based on Bayesian optimization, we show that our approach is superior
in that, within the same number of comparisons,
it approaches the global optimum more closely and is computationally lighter.
MATLAB and a Python implementations of the algorithms described in the paper 
are available at \url{http://cse.lab.imtlucca.it/~bemporad/idwgopt}.
\end{abstract}

%\begin{keywords}
\noindent\textbf{Keywords}:
Active preference learning, preference-based optimization, global optimization, 
derivative-free algorithms, black-box optimization, surrogate models, multi-objective optimization,
inverse distance weighting, Bayesian optimization.
%\end{keywords}

\section{Introduction}
Taking an optimal decision is the process of selecting the value of certain variables that produces ``best'' results. 
When using mathematical programming to solve this problem, ``best'' means that the taken decision minimizes a certain cost function,
or equivalently maximizes a certain utility function. However, in many problems an objective function is not quantifiable, either because it is of qualitative nature or because it involves several goals. Moreover, sometimes the ``goodness'' of a certain combination of decision variables 
can only be assessed by a human decision maker. 

This situation arises in many practical cases. When calibrating the parameters of a deep neural network whose goal is to generate a synthetic painting or artificial music, artistic ``beauty'' is hardly captured by a numerical function, 
and a human decision-maker is required to assess whether a certain combination of parameters produces ``beautiful'' results.
For example, the authors of~\cite{BDG08} propose a tool to help digital artists to calibrate the parameters of an image generator so
that the synthetic image ``resembles'' a given one. Another example is in 
industrial automation when a calibrating the tuning knobs of a control system:
based on engineering insight and rules of thumb, the task is usually carried out manually 
by trying a series of combinations until the calibrator is satisfied by the observed closed-loop performance. 
Another frequent situation in which it is hard to formulate an objective function is in multi-objective optimization~\cite{CP07}.
Here, selecting a-priori the correct weighted sum of the objectives to minimize 
in order to choose an optimal decision vector can be very difficult, and is often a human operator that needs to assess whether a certain Pareto optimal solution is better than another one, based on his or her (sometimes unquantifiable) feelings. 

It is well known in neuroscience that humans are better at choosing between two options (``this is better than that'') than among multiple ones~\cite{CKHWR14,CBG15}. In consumer psychology, the ``choice overload'' effect shows that a human, when presented an abundance of options,
has more difficulty to make a decision than if only a few options are given. On the other hand, having  a large number of possibilities
to choose from creates very positive feelings in the decision maker~\cite{CBG15}. In economics, the difficulty of rational behavior in choosing the best option
was also recognized in~\cite{Sim55}, due to the complexity of the decision problem exceeding the cognitive resources of the decision maker. Indeed, choosing the best option implies ranking choices by absolute values and therefore quantifying a clear objective function to optimize, a process that
might be difficult due to the complexity and fuzziness of many criteria that are involved. For the above reasons, the importance of focusing on discrete choices
in psychology dates back at least to the 1920's~\cite{Thu27}.

Looking for an optimal value of the decision variables that is ``best'', in that the human operator always prefers it compared to
all other tested combinations, may involve a lot of trial and error. For example, in parameter calibration the operator has to try
many combinations before being satisfied with the winner one. The goal of \emph{active preference learning} is to drive the trials
by automatically proposing decision vectors to the operator for testing, so to converge to the best choice possibly within the least number of experiments.

In the derivative-free black-box global optimization literature there exist some methods for minimizing  an objective function $f$ 
that can be used also for preference-based learning. 
Since, given two decision vectors $x_1$, $x_2$, we can say that $x_2$ is not ``preferred'' to $x_1$ if $f(x_1)\leq f(x_2)$, finding
a global optimizer can be reinterpreted as the problem of looking for the vector $x^\star$ such that it is preferred
to any other vector $x$. Therefore, optimization methods that only observe the outcome of the comparison $f(x_1)\leq f(x_2)$ (and not the values $f(x_1),f(x_2)$, not even the difference $f(x_1)-f(x_2)$) can be used for preference-based optimization. For example
particle swarm optimization (PSO) algorithms~\cite{Ken10,VV07} drive the evolution of particles only
based on the outcome of comparisons between function values and could be used in principle for preference-based optimization. However, although very effective in solving many complex global optimization problems, PSO is not conceived for keeping the number of evaluated preferences small, as it relies on randomness (of changing magnitude) in moving the particles, and would be therefore especially inadequate in solving problems where a human
decision maker is involved in the loop to express preferences.

Different methods were proposed in the global optimization literature for 
finding a global minimum of  functions that are expensive to evaluate~\cite{RS13}. Some of the most successful ones rely on
computing a simpler-to-evaluate \emph{surrogate} of the objective function and use it to drive the search
of new candidate optimizers to sample~\cite{Jon01}. The surrogate is refined iteratively as new values of the actual objective function  are collected
at those points. Rather than minimizing the surrogate, which may easily lead to miss the global optimum
of the actual objective function, an \emph{acquisition function} is minimized instead to generate new candidates.
The latter function consists of a combination of the surrogate and of an extra term that promotes
exploring areas of the decision space that have not been yet sampled. 

Bayesian Optimization (BO) is a very popular method exploiting surrogates to globally optimize functions
that are expensive to evaluate. In BO, the surrogate of the underlying objective function is modeled as a Gaussian process (GP),
so that model uncertainty can be characterized using probability theory and used to drive the search~\cite{Kus64}.
BO is used in several methods such as Kriging~\cite{Mat63}, in Design and Analysis of Computer Experiments (DACE)~\cite{SWMW89}, 
in the Efficient Global Optimization (EGO) algorithm~\cite{JSW98}, and is nowadays heavily used in machine learning for hyper-parameter tuning~\cite{BCD10}.

Bayesian optimization has been proposed also for minimizing (unknown) black-box functions based only on preferences~\cite{BDG08,GZDL17,ASRGV19}. The surrogate function describing the observed set of preferences is described in terms of a GP, using a probit model to describe the observed pairwise preferences~\cite{CG05} and the Laplace approximation  of the posterior distribution of the latent function to minimize.  The GP provides a probabilistic prediction of the preference that  is used to define an acquisition function (like expected improvement) which is maximized in order to select the next query point. The acquisition function used in Bayesian preference automatically balances \emph{exploration} (selecting  queries  with high uncertainty on the preference) and \emph{exploitation} (selecting  queries which are expected to lead to improvements in the objective function).

In this paper we propose a new approach to active preference learning optimization that models the surrogate by using
  general radial basis functions (RBFs) rather than a GP, and inverse distance weighting (IDW) functions for exploration
of the space of decision variables. A related approach was recently proposed by one of the authors in~\cite{Bem19}
for global optimization of known, but difficult to evaluate, functions. Here, we use instead RBFs to construct a surrogate function that only
needs to satisfy, if possible, the preferences already expressed by the decision maker at sampled points. 
The  training dataset of the surrogate function is actively augmented in an incremental way by the proposed algorithm
according to two alternative criteria. The first criterion, similarly to~\cite{Bem19}, is based on a trade off
between minimizing the surrogate and  maximizing the distance from existing samples using IDW functions.
At each iteration, the RBF weights are computed by solving a linear or  quadratic programming  problem aiming at satisfying 
the available training  set of pairwise preferences.  The second alternative criterion is based on quantifying the probability of getting an improvement based on a maximum-likelihood  interpretation of the RBF weight selection  problem, which allows 
quantifying the probability of getting an improvement based on the surrogate function. Based on one of the above criteria, 
the proposed algorithm constructs an acquisition function that is very cheap to evaluate and is minimized
to generate a new sample and to query a new preference.

Compared to  preferential  Bayesian optimization, the proposed approach
is computationally lighter, due to the fact that computing the surrogate simply requires solving a convex quadratic or linear programming problem. Instead, in PBO, one has  to first compute the Laplace approximation  of the posterior distribution of the preference function, which requires to  calculate (via a Newton-Raphson numerical optimization algorithm) the mode of the posterior distribution. Then, a system of linear equations, with size equal to the number of observations, has to be  solved. 	 
Moreover, the IDW term used by our approach to promote exploration does not depend on the surrogate, which guarantees that the space of optimization variables is well explored  even if the surrogate poorly approximates the underlying preference function.     
Finally, the performance of our method in approaching the optimizer within the allocated number of preference queries is similar  and sometimes better than preferential Bayesian optimization, as we will show in a set of benchmarks used in global optimization and in solving a multi-objective
optimization problem.  

The paper is organized as follows. In Section~\ref{sec:prob_formulation} we formulate the preference-based optimization problem we want to solve. Section~\ref{sec:surrogate}
proposes the way to construct the surrogate function using linear or quadratic programming
and Section~\ref{sec:acquisition} the acquisition functions that are used for generating new samples. The active preference learning algorithm is stated in Section~\ref{sec:algorithm} and its
possible application to solve multi-objective optimization problems in Section~\ref{sec:multiobj}.
Section~\ref{sec:results} presents numerical results obtained in applying the preference learning
algorithm for solving a set of benchmark global optimization problems, a multi-objective optimization problem, and for optimal tuning of a cost-sensitive  neural network classifier  for object recognition from images. Finally, some conclusions are drawn in Section~\ref{sec:conclusions}.

A MATLAB and a Python implementation of the proposed approach is available for download at \url{http://cse.lab.imtlucca.it/~bemporad/idwgopt}.

\section{Problem statement}
\label{sec:prob_formulation}
Let $\rr^n$ be the space of decision variables.
Given two possible decision vectors $x_1,x_2\in\rr^n$, consider
the \emph{preference function}  $\pref:\rr^n\times\rr^n\to\{-1,0,1\}$ defined as
%\footnote{The values $-1,0,1$ are only used in this paper as labels to distinguish the outcome of comparisons, they are not relevant per se.}
\begin{equation}
    \pref(x_1,x_2)=\left\{\ba{ll}
    -1 &\mbox{if $x_1$ ``better'' than $x_2$}\\
    0 &\mbox{if $x_1$ ``as good as'' $x_2$}\\
    1 &\mbox{if $x_2$ ``better'' than $x_1$}
    \ea\right.
\label{eq:pref_fun}
\end{equation}
where for all $x_1,x_2\in\rr^n$ it holds $\pref(x_1,x_1)=0$, $\pref(x_1,x_2)=-\pref(x_2,x_1)$,
and the transitive property
\begin{equation}
    \pref(x_1,x_2) = \pref(x_2,x_3)=-1\ \Rightarrow  \pref(x_1,x_3) = -1
\label{eq:trans-prop}
\end{equation}
The objective of this paper is to solve the following
constrained global optimization problem:
\begin{equation}
    \mbox{find}\ x^\star\ \mbox{such that}\ \pref(x^\star,x)\leq 0,\ \forall x\in\XX,\ \ell\leq x\leq u
\label{eq:glob-opt-pref}
\end{equation}
that is to find the vector $x^\star\in\rr^n$ of decision variables that is ``better'' (or ``no worse'')
than any other vector $x\in\rr^n$ according to the preference function $\pref$.

Vectors  $\ell,u\in\rr^n$ in~\eqref{eq:glob-opt-pref} define lower and upper bounds
on the decision vector, and $\XX\subseteq\rr^n$ imposes further constraints on $x$,
such as
\begin{equation}
    \XX=\{x\in\rr^n:\ g(x)\leq 0\}
\label{eq:g(x)}
\end{equation}
where $g:\rr^n\to\rr^q$, and $\XX=\rr^n$ when $q=0$ (no inequality constraint is enforced). We assume that the 
condition $x\in\XX$ %$g(x)$ 
is easy to evaluate, for example in case of linear inequality constraints 
we have $g(x)=Ax-b$, $A\in\rr^{q\times n}$, $b\in\rr^q$, $q\geq 0$. 
When formulating~\eqref{eq:glob-opt-pref} we have excluded equality
constraints $A_{e}x=b_e$, as they can be eliminated by reducing 
the number of optimization variables.

The problem of minimizing  an \emph{objective function} $f:\rr^n\to\rr$ 
under constraints,
\begin{equation}
\ba{rcrl}
x^\star&=&\arg\min_x& f(x)\\
& &\st & \ell\leq x\leq u\\
& &    &x\in\XX %g(x)\leq 0
\ea
\label{eq:glob-opt}
\end{equation}
can be written as in~\eqref{eq:glob-opt-pref} by defining 
\begin{equation}
    \pref(x_1,x_2)=\left\{\ba{ll}
    -1 &\mbox{if $f(x_1)<f(x_2)$}\\
    0 &\mbox{if $f(x_1)=f(x_2)$}\\
    1 &\mbox{if $f(x_1)>f(x_2)$}
    \ea\right.
\label{eq:pref_fun-f}
\end{equation}
In this paper we assume that \emph{we do not have a way to evaluate the objective function} $f$.
The only assumption we make is that for each given pair
of decision vectors $x_1,x_2\in\XX$, $\ell\leq x\leq u$, only the value $\pref(x_1,x_2)$ is observed. The rationale of our problem formulation is that 
often one encounters practical decision problems in which a function $f$ is impossible to 
quantify, but anyway it is possible to express a \emph{preference}, for example by a human operator, for any given presented pair  $(x_1, x_2)$. The goal
of the active preference learning algorithm proposed in this paper is
to suggest iteratively a sequence of samples $x_1,\ldots,x_N$ to test and compare 
such that $x_N$ approaches $x^\star$ as $N$ grows.

In what follows we implicitly assume that a function $f$ actually exists but is completely unknown,
and attempt to synthesize a \emph{surrogate function} $\hat f:\rr^n\to\rr$ of $f$
such that its associated preference function $\hat\pref:\rr^n\times\rr^n\to\{-1,0,1\}$
defined as in~\eqref{eq:pref_fun-f} coincides with $\pref$ on the finite set of
sampled pairs of decision vectors.

\section{Surrogate function}
\label{sec:surrogate}
Assume that we have generated $N\geq 2$ samples $X=\{x_1\ \ldots\ x_N\}$ of the decision vector, 
with $x_i,x_j\in\rr^n$ such that $x_i\neq x_j$, $\forall i\neq j$,
$i,j=1,\ldots,N$, and have evaluated a \emph{preference vector} $B=[b_1\ \ldots\ b_{M}]'\in\{-1,0,1\}^{M}$
\begin{equation}
        b_h=\pref(x_{i(h)},x_{j(h)})
\label{eq:pref-vector}
\end{equation}
where  $M$ is the number of expressed preferences,  $1 \leq M \leq \binom{N}{2}$, 
$h\in\{1,\ldots,M\}$, $i(h), j(h)\in \{1,\ldots,N\}$, $i(h) \neq j(h)$.

In order to find a surrogate function $\hat f:\rr^{n} \to\rr$ such that
\begin{equation}
   \pref(x_{i(h)},x_{j(h)}) = \hat\pref(x_{i(h)},x_{j(h)}),\ \forall h=1,\ldots,M
\label{eq:surrogate-condition}
\end{equation}
where $\hat \pref$ is defined from $\hat f$ as in~\eqref{eq:pref_fun-f},
we consider a surrogate function $\hat f$ defined as the following
radial basis function (RBF) interpolant~\cite{Gut01,MGTM07}
\begin{equation}
    \hat f(x)=\sum_{i=1}^N\beta_i\phi(\epsilon d(x,x_i))
\label{eq:rbf}
\end{equation}
In~\eqref{eq:rbf} function $d:\rr^{2n}\to\rr$ is the Euclidean distance
\begin{equation}
    d(x_1,x_2)=\|x_1-x_2\|_2^2,\ x_1,x_2\in\rr^n
\label{eq:distance}
\end{equation}
$\epsilon>0$ is a scalar parameter, $\phi:\rr\to\rr$ is a RBF,
and $\beta_i$ are coefficients that we determine as explained below.
Examples of RBFs are $\phi(\epsilon d)=\frac{1}{1+(\epsilon d)^2}$
(\emph{inverse quadratic}), $\phi(\epsilon d)=e^{-(\epsilon d)^2}$ (\emph{Gaussian}),
$\phi(\epsilon d)=(\epsilon d)^2\log(\epsilon d)$ (\emph{thin plate spline}), see
more examples in~\cite{Gut01,Bem19}. 

In accordance with~\eqref{eq:surrogate-condition}, we impose the following preference conditions
\begin{equation}
    \ba{ll}
    \hat f(x_{i(h)})\leq\hat f(x_{j(h)})-\sigma+\varepsilon_h,& \forall h=1,\ldots,M\ \mbox{such that}\ \pref(x_{i(h)},x_{j(h)})=-1\\
    \hat f(x_{i(h)})\geq\hat f(x_{j(h)})+\sigma-\varepsilon_h,& \forall h=1,\ldots,M\ \mbox{such that}\ \pref(x_{i(h)},x_{j(h)})=1\\
    |\hat f(x_{i(h)})-\hat f(x_{j(h)})|\leq \sigma+\varepsilon_h,& \forall h=1,\ldots,M\ \mbox{such that}\ \pref(x_{i(h)},x_{j(h)})=0
    \ea
\label{eq:RBF-pref}
\end{equation}
where $\sigma>0$ is a given tolerance
and $\varepsilon_h$ are slack variables, $\varepsilon_h\geq 0$,
$h=1,\ldots,M$.

Accordingly, the coefficient vector $\beta=[\beta_1\ \ldots\ \beta_N]'$  is obtained by 
solving the following convex optimization problem
\begin{equation}
    \ba{rll}
    \min_{\beta,\varepsilon} &\displaystyle{\sum_{h=1}^{M} c_h\varepsilon_h+\frac{\lambda}{2}\sum_{k=1}^N\beta_{k}^2 }\\
    \st & \displaystyle{\sum_{k=1}^N(\phi(\epsilon d(x_{i(h)},x_{k})-\phi(\epsilon d(x_{j(h)},x_{k}))\beta_k\leq 
    -\sigma+\varepsilon_h},& \forall h:\ b_h=-1\\
     &\displaystyle{\sum_{k=1}^N(\phi(\epsilon d(x_{i(h)},x_{k})-\phi(\epsilon d(x_{j(h)},x_{k}))\beta_k\geq 
    \sigma-\varepsilon_h},& \forall h:\ b_h=1\\
    &\displaystyle{ \sum_{k=1}^N(\phi(\epsilon d(x_{j(h)},x_{k})-\phi(\epsilon d(x_{i(h)},x_{k}))\beta_k\leq\sigma+\varepsilon_h},& \forall h:\ b_h=0\\
    & \displaystyle{\sum_{k=1}^N(\phi(\epsilon d(x_{i(h)},x_{k})-\phi(\epsilon d(x_{j(h)},x_{k}))\beta_k\geq-\sigma-\varepsilon_h},& \forall h:\ b_h=0\\  & h=1,\ldots,M
    \ea
\label{eq:QP-pref}
\end{equation}
where $c_h$ are positive weights, for example $c_h=1$, $\forall h=1,\ldots,M$.
The scalar $\lambda$ is a regularization parameter. When $\lambda>0$ problem~\eqref{eq:QP-pref} is a quadratic programming (QP) problem that, since 
$c_h>0$ for all $h=1,\ldots,M$, admits a unique solution. If $\lambda=0$ problem~\eqref{eq:QP-pref} becomes a
linear program (LP), whose solution may not be unique.

Note that the use of slack variables $\varepsilon_h$ in~\eqref{eq:QP-pref}
allows one to relax the constraints imposed by the specified preference vector $B$.
Constraint infeasibility might be due to an inappropriate selection
of the RBF and/or to outliers in the acquired components $b_h$ of vector $B$.
The latter condition may easily happen when preferences $b_h$ are
expressed by a human decision maker in an inconsistent way.

For a given set $X=\{x_1\ \ldots\ x_N\}$ of samples, setting up~\eqref{eq:QP-pref} requires 
computing the $N\times N$ symmetric matrix $\Psi$ whose $(i,j)$-entry is
\begin{equation}
    \Psi_{ij}=\phi(\epsilon d(x_i,x_j))
\label{eq:RBF-matrix}
\end{equation}
with $\Psi_{ii}=1$ for the inverse quadratic and Gaussian
RBF, while for the thin plate spline RBF $\Psi_{ii}=\lim_{d\rightarrow 0}\phi(\epsilon d)=0$.
Note that if a new sample $x_{N+1}$ is collected, updating matrix $\Psi$ only requires computing $\phi(d(x_{N+1},x_j),\epsilon)$ for all $j=1,\ldots,N+1$.

\begin{figure}[t]
\begin{center}{\includegraphics[width=\hsize]{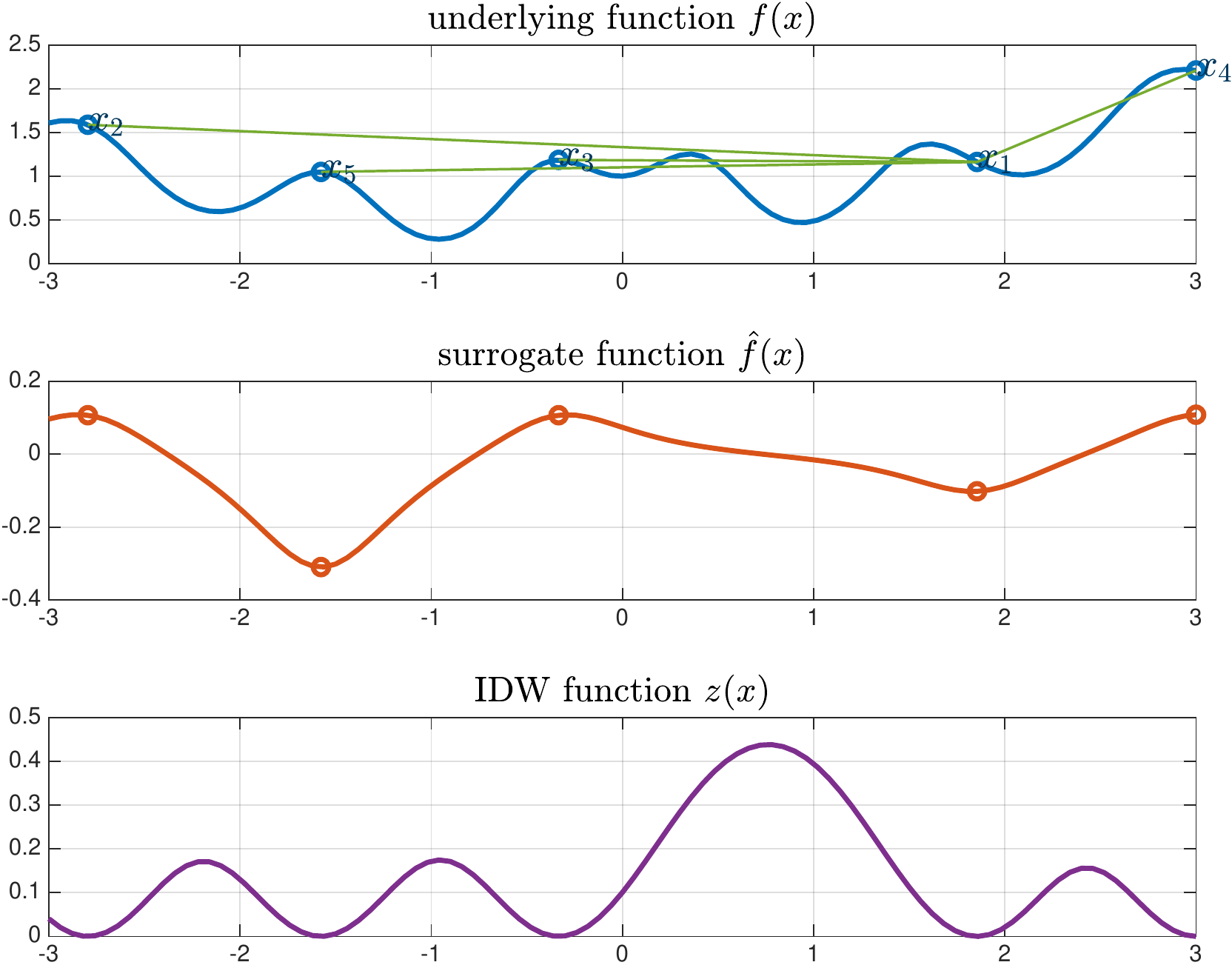}}\end{center}
\caption{Example of surrogate function $\hat f$ (middle plot) 
based on preferences resulting from function $f$ (top plot, blue) 
as in~\eqref{eq:f-1d-example}. Pairs of samples generating comparisons
are connected by a green line. IDW exploration function $z$ (bottom plot)}
\label{fig:surrogate}
\end{figure}

An example of surrogate function $\hat f$ constructed based on preferences
generated as in~\eqref{eq:pref_fun-f} by the following scalar function~\cite{Bem19}
\begin{equation}
    f(x)=\left(1+\frac{x\sin(2x)\cos(3x)}{1+x^2}\right)^2+\frac{x^2}{12}+\frac{x}{10}
\label{eq:f-1d-example}
\end{equation}
is depicted in Figure~\ref{fig:surrogate}. The surrogate is generated from $N=6$ samples
by solving the LP~\eqref{eq:QP-pref} ($\lambda=0$) with matrix $\Phi$ generated by the inverse quadratic 
RBF with $\epsilon=2$ and $\sigma=\frac{1}{N}$.

\subsection{Self-calibration of RBF} \label{Self-cal}
Computing the   surrogate $\hat f$  requires to choose the hyper-parameter $\epsilon$ defining the shape of the RBF $\phi$ (Eq.~\eqref{eq:rbf}). This parameter can be tuned through $K$-fold cross-validation~\cite{stone1974cross}, by splitting the $M$ available pairwise comparisons into $K$ (nearly equally sized) disjoint subsets.
To this end, let us define the index sets 
 $\Ss{i}$, $i=1,\ldots,K$, such that $  \cup_{i=1}^K\Ss{i}=\{1,\ldots,M\}$, $\Ss{i} \cap \Ss{j} = \emptyset$, for all  $i,j=1,\ldots,K$, $i\neq j$. For a given $\epsilon$ and for all $i=1,\ldots,K$, the preferences indexed by the  set $\{1,\ldots,M\} \setminus \Ss{i}$  are   used to fit   the surrogate function $\hat f_\epsilon$ by solving~\eqref{eq:QP-pref}, while the  performance of $\hat f_\epsilon$ in predicting comparisons indexed by $\Ss{i}$  is quantified in terms of number of  correctly  classified   preferences $\mathcal{C}_i(\epsilon)=\sum_{h \in \Ss{i}} \eta_h(\epsilon)$, 
 where $\eta_h(\epsilon)=1$ if $  \pref(x_{i(h)},x_{j(h)}) = \hat\pref_\epsilon(x_{i(h)},x_{j(h)})$ or $0$ otherwise, and $\hat\pref_\epsilon$  is the preference function induced by  $\hat f_\epsilon$ as in~\eqref{eq:pref_fun-f}. Since the hyper-parameter $\epsilon$ is scalar, a fine grid search can be used to find the value of $\epsilon$  maximizing $\sum_{i=1}^K\mathcal{C}_i(\epsilon)$.

 Since in active preference learning the number $M$ of observed pairwise preferences is usually  small, we use $\Ss{h}=\{h\}$, $h=1,\ldots,M$,  namely 
 $M$-fold cross validation or \emph{leave-one-out},  to better exploit the $M$ available  comparisons.

Let $x^\star_N\in\rr^n$ be the best vector of decision variables in the finite set 
$X=\{x_1,\ldots,x_N\}$, that is
\begin{equation}
    \pref(x^\star_N,x)\leq 0,\ \forall x\in X
\label{eq:pref-x-star}
\end{equation}
Since in active preference learning one is mostly interested in correctly predicting the preference w.r.t. the best optimal point $x^\star_N$, the solution of problem~\eqref{eq:QP-pref} and the corresponding  score $\mathcal{C}_i(\epsilon)$ are not computed for  all 
 indexes $h$ such that $x_{i(h)}=x^\star_N$, that is   the preferences involving $x^\star_N$ are only used for training and not for testing. 

The $K$-fold cross-validation  procedure for self-calibration  requires to formulate and solve problem~\eqref{eq:QP-pref} $K$ times ($M=N-1$ times in case of leave-one-out cross validation, or less when comparisons involving $x_N^\star$ are only used for training).   In order to reduce computations, self-calibration can be executed only at a subset $\mathcal{I}_{\rm{sc}} \subseteq \{1,\ldots,N_{\rm max}-1\}$ of iterations.
 
\section{Acquisition function}
\label{sec:acquisition}
Let $x^\star_N\in\rr^n$ be the best vector of decision variables defined in~\eqref{eq:pref-x-star}.
Consider the following procedure: ($i$) generate a new sample by pure minimization of the surrogate function $\hat f$ 
defined in~\eqref{eq:rbf},
\[
    x_{N+1}=\arg\min\hat f(x)\ \st\ \ell\leq x\leq u,\ x\in\XX %$g(x)\leq 0$
\]
with $\beta$ obtained by solving the LP~\eqref{eq:QP-pref}, ($ii$) evaluate
$\pref(x_{N+1},x^\star_N)$, ($iii$) update $\hat f$, and ($iv$) iterate over $N$.
Such a procedure may easily miss the global minimum of~\eqref{eq:glob-opt-pref}, a phenomenon 
that is well known in global optimization based on surrogate functions:
purely minimizing the surrogate function may lead to converge to a point that is not the global minimum of the original function~\cite{Jon01,Bem19}. Therefore, the \emph{exploitation} of the surrogate function $\hat f$
is not enough to look for a new sample $x_{N+1}$, but also an \emph{exploration} objective
must be taken into account to probe other areas of the feasible space.

In the next paragraphs we propose two different acquisition functions that can be used to define the new sample $x_{N+1}$ to compare the current best sample $x^\star_N$ to.

\subsection{Acquisition based on inverse distance weighting}
Following the approach suggested in~\cite{Bem19}, we construct an exploration function using ideas
from inverse distance weighting (IDW). Consider the \emph{IDW exploration function}
$z:\rr^n\to\rr$ defined by
\begin{equation}
    z(x)=\left\{\ba{ll}
0 & \mbox{if}\ x\in\{x_1,\ldots,x_N\}\\
\tan^{-1}\left(\frac{1}{\sum_{i=1}^Nw_i(x)}\right)&
\mbox{otherwise}\ea\right.
\label{eq:IDW-distance}
\end{equation}
where $w_i:\rr^n\to\rr$ is defined by~\cite{She68}
\begin{equation}
    w_i(x)=\frac{1}{d^2(x,x_i)}
\label{eq:w-IDW-basic}
\end{equation}
Clearly $z(x_i)=0$ for all $x_i\in X$, and $z(x)>0$ in $\rr^n\setminus X$. The arc tangent
function in~\eqref{eq:IDW-distance} avoids that $z(x)$ gets excessively large
far away from all sampled points. Figure~\ref{fig:surrogate} shows the IDW exploration function $z$ obtained
from~\eqref{eq:IDW-distance} for the example generated from~\eqref{eq:f-1d-example}.

Given an exploration parameter $\delta\geq 0$, the \emph{acquisition function} $a:\rr^n\to\rr$
is defined as
\begin{equation}
    a(x)=\frac{\hat f(x)}{\Delta\hat F}-\delta z(x)
\label{eq:acquisition}
\end{equation}
where 
\[
    \Delta \hat F=\max_i\{\hat f(x_i)\}-\min_i\{\hat f(x_i)\}
\]
is the range of the surrogate function on the samples in $X$. By setting
\begin{subequations}
\begin{equation}
    y=M\beta
\end{equation}
we get $\hat f(x_i)=y_i$, $\forall i=1,\ldots,N$, and therefore
\begin{equation}
    \Delta\hat F = \max(y)-\min(y)
\end{equation}
\label{eq:Delta F}%
\end{subequations}
Clearly $\Delta \hat F \geq \sigma$ if at least one comparison $b_h=\pref(x_{i(h)},x_{i(h)}) \neq 0$.  The scaling factor $\Delta \hat F$ is used to simplify the choice of the exploration
parameter $\delta$.

The following lemma immediately derives from~\cite[Lemma~2]{Bem19}:
\begin{lemma}
Function $a$ is differentiable everywhere on $\rr^n$.
\label{lemma:sz}
\end{lemma}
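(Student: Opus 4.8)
The plan is to exploit the additive structure of the acquisition function and reduce the claim to the differentiability of its two building blocks. Since $\Delta\hat F$ is a constant computed on the fixed sample set $X$ (and is strictly positive, indeed $\Delta\hat F\geq\sigma$, whenever at least one nonzero preference has been expressed), I would write $a=\frac{1}{\Delta\hat F}\hat f-\delta z$ as a fixed linear combination of $\hat f$ and $z$. As differentiability is preserved under linear combinations, it suffices to establish it separately for $\hat f$ and for $z$ on all of $\rr^n$.

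For the surrogate $\hat f$ I would argue termwise. Each summand $\beta_i\phi(\epsilon d(x,x_i))$ is the composition of the RBF $\phi$ with the map $x\mapsto\epsilon d(x,x_i)=\epsilon\|x-x_i\|_2^2$, which is a polynomial in the entries of $x$ and hence $C^\infty$. For the inverse quadratic and Gaussian RBFs, $\phi$ is smooth on all of $\rr$, so every term is differentiable everywhere; for the thin plate spline one checks that the outer map $t\mapsto(\epsilon t)^2\log(\epsilon t)$, extended by $0$ at $t=0$, is $C^1$ on $[0,\infty)$ (its derivative $\epsilon^2(2\epsilon t\log(\epsilon t)+\epsilon t)\to 0$ as $t\to 0^+$), and composing it with the smooth $t=\|x-x_i\|_2^2$ keeps each term differentiable even at $x=x_i$. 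A finite sum of differentiable functions is differentiable, giving the claim for $\hat f$.

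The heart of the argument, and the step I expect to be the main obstacle, is the differentiability of the exploration term $z$, in particular its behaviour at the sample points $x_j$, where its piecewise definition switches value. Away from $X$ the function is easy: each $w_i(x)=1/d^2(x,x_i)=\|x-x_i\|_2^{-4}$ is smooth and strictly positive there, so $\sum_{i=1}^N w_i(x)$ is smooth and positive, its reciprocal is smooth, and composing with the smooth $\tan^{-1}$ shows that $z$ is $C^\infty$ on $\rr^n\setminus X$. At a sample point $x_j$ I would use a local asymptotic expansion. Writing $r=\|x-x_j\|_2$ for $x$ near but distinct from $x_j$, the $j$-th term dominates, so that $\sum_i w_i(x)=r^{-4}+s(x)$ with $s(x)=\sum_{i\neq j}\|x-x_i\|_2^{-4}$ bounded and smooth near $x_j$ (because $x_i\neq x_j$ for $i\neq j$). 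Hence $\bigl(\sum_i w_i(x)\bigr)^{-1}=r^4/(1+r^4 s(x))=r^4+O(r^8)$, and since $\tan^{-1}(t)=t+O(t^3)$ we obtain $z(x)=r^4+O(r^8)=O(\|x-x_j\|_2^4)$. As $z(x_j)=0$, this yields $z(x)-z(x_j)=O(\|x-x_j\|_2^4)=o(\|x-x_j\|_2)$, so $z$ is differentiable at $x_j$ with $\nabla z(x_j)=0$.

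Combining the three arguments, $a$ is differentiable at every point of $\rr^n$, which is exactly Lemma~\ref{lemma:sz}; the reduction to $z$ is precisely where the cited \cite[Lemma~2]{Bem19} does the work. The only delicate point is matching the two pieces of the definition of $z$ at the sample points, and the fourth-power decay $z(x)=O(\|x-x_j\|_2^4)$ resolves it cleanly by forcing both the value and the gradient of $z$ to vanish there.
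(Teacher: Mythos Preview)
Your argument is correct. The paper itself offers no proof of this lemma: it simply states that the result ``immediately derives from~\cite[Lemma~2]{Bem19}'' and moves on. What you have written is a self-contained reconstruction of that cited result, and the content matches: the only nontrivial step is the behaviour of the IDW exploration term $z$ at the sample points, and your local expansion $z(x)=\|x-x_j\|_2^4+O(\|x-x_j\|_2^8)$ is precisely the mechanism that forces $\nabla z(x_j)=0$ and glues the two pieces of the definition together. The discussion of $\hat f$ for the various RBFs and the remark that $\Delta\hat F$ is a fixed positive scalar are extra hygiene that the paper omits by appeal to the reference.

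One cosmetic slip: in the thin-plate-spline case your derivative of $t\mapsto(\epsilon t)^2\log(\epsilon t)$ carries a spurious factor of $\epsilon$; the correct expression is $\epsilon^2 t\,(2\log(\epsilon t)+1)$. This does not affect the conclusion, since the limit as $t\to 0^+$ is still $0$ and the $C^1$ extension at the origin goes through unchanged.
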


As we will detail below, given a set $X$ of $N$ samples $\{x_1,\ldots,x_N\}$ and a vector $B$
of preferences defined by~\eqref{eq:pref-vector}, the next sample $x_{N+1}$ 
is defined by solving the global optimization problem
\begin{equation}
    x_{N+1}=\arg\min_{\ell\leq x\leq u,\ x\in\XX%g(x)\leq 0
                     } a(x)
\label{eq:xNp1}
\end{equation}
Problem~\eqref{eq:xNp1} can be solved very efficiently using various global optimization techniques, either derivative-free~\cite{RS13} or, if $\XX=\{x: g(x)\leq 0\}$ and $g$ is also differentiable, derivative-based. In case some components of vector $x$ are restricted to be integer,~\eqref{eq:xNp1} can be solved by mixed-integer programming.

\subsection{Acquisition based on maximum likelihood of improvement}
We show how the surrogate function $\hat f$ derived by solving  problem~\eqref{eq:QP-pref} can be seen as a maximum likelihood estimate of 
an appropriate probabilistic model. The analyses described in the following are inspired by the probabilistic interpretation of \emph{support vector machines} described in~\cite{franc2011}.

Let $\lambda>0$ and let $\Phi(\epsilon,X,x_{i(h)},x_{j(h)})$ be the $N$-dimensional vector obtained by collecting
the terms $\phi(\epsilon d(  x_{i(h)},x_k))-\phi(\epsilon d( x_{j(h)},x_k))$, with
$h=1,\ldots,M$, $k=1,\ldots,N$.

Let us rewrite the QP problem~\eqref{eq:QP-pref} without the slack variables $\varepsilon_i$ as 
\begin{equation} \label{eqn:solveP}
 \ba{rll}
{\displaystyle\min_{\beta}} &{\displaystyle\sum_{h=1}^{M}} c_{h} \ell_{b_h}(\Phi(\epsilon,X,x_{i(h)},x_{j(h)})'\beta)+ \frac{\lambda}{2} \left\| \beta \right\|^2
\ea 
\end{equation}
where 
\begin{subequations}
\begin{align}
\ell_{-1}(\Phi(\epsilon,X,x_{i(h)},x_{j(h)})'\beta) = & \max\{0, \Phi(\epsilon,X,x_{i(h)},x_{j(h)})'\beta+ \sigma) \}  \\   
\ell_{1}(\Phi(\epsilon,X,x_{i(h)},x_{j(h)})'\beta) = & \max\{0;-\Phi(\epsilon,X,x_{i(h)},x_{j(h)})'\beta + \sigma \}  \\
\ell_{0}( \Phi(\epsilon,X,x_{i(h)},x_{j(h)})'\beta) = & \max\{0, \pm\Phi(\epsilon,X,x_{i(h)},x_{j(h)})'\beta- \sigma\} 
\end{align} 
\end{subequations}
are piecewise linear convex functions of $\beta$, for all $h=1,\ldots,M$.

\begin{theorem}\label{Th:eqprob}
	For a given hyper-parameter $\lambda>0$, let $\beta(\lambda)$ be the minimizer of problem~\eqref{eqn:solveP} and let $\tau(\lambda) = \left\| \beta(\lambda)\right\|$. Then vector $u^\star=\frac{\beta(\lambda)}{\tau(\lambda)}$
is the minimizer of the following
problem
	\begin{align}
 \min_{u: \|u\|=1} \sum_{h=1}^{M} c_{h} \ell_{b_h}(\tau(\lambda)\Phi(\epsilon,X,x_{i(h)},x_{j(h)})'u)  \label{eqn:utau}
	\end{align}
\end{theorem}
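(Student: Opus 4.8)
The plan is to exploit the fact that the regularization term $\frac{\lambda}{2}\|\beta\|^2$ is \emph{constant} on any fixed sphere $\|\beta\|=\tau$, so that minimizing the full objective restricted to such a sphere reduces to minimizing only the loss part. For brevity write $\Phi_h=\Phi(\epsilon,X,x_{i(h)},x_{j(h)})$ and split the objective of~\eqref{eqn:solveP} as $J(\beta)=F(\beta)+\frac{\lambda}{2}\|\beta\|^2$, where $F(\beta)=\sum_{h=1}^{M}c_h\,\ell_{b_h}(\Phi_h'\beta)$. Since $\lambda>0$ and each $\ell_{b_h}$ is convex, $J$ is strongly convex and admits the unique minimizer $\beta(\lambda)$; set $\tau(\lambda)=\|\beta(\lambda)\|$, which I take to be strictly positive (otherwise $u^\star$ is undefined and the statement is vacuous).

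The key step is to show that $\beta(\lambda)$ minimizes $F$ over the sphere $\{\beta:\ \|\beta\|=\tau(\lambda)\}$. I would argue by contradiction: suppose some $\beta'$ with $\|\beta'\|=\tau(\lambda)$ satisfies $F(\beta')<F(\beta(\lambda))$. Because $\|\beta'\|=\|\beta(\lambda)\|=\tau(\lambda)$, the regularizer takes the identical value $\frac{\lambda}{2}\tau(\lambda)^2$ at both points, hence $J(\beta')<J(\beta(\lambda))$, contradicting the global optimality of $\beta(\lambda)$ for~\eqref{eqn:solveP}. Therefore $\beta(\lambda)\in\arg\min_{\|\beta\|=\tau(\lambda)}F(\beta)$.

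Finally, I would reparametrize every $\beta$ on that sphere as $\beta=\tau(\lambda)\,u$ with $\|u\|=1$, which is a bijection between the sphere of radius $\tau(\lambda)$ and the unit sphere. Under this substitution $F(\beta)=\sum_{h=1}^{M}c_h\,\ell_{b_h}(\tau(\lambda)\,\Phi_h'u)$, which is exactly the objective of~\eqref{eqn:utau}, and the minimizer over the radius-$\tau(\lambda)$ sphere corresponds to $u^\star=\beta(\lambda)/\tau(\lambda)$ over the unit sphere. Hence $u^\star$ solves~\eqref{eqn:utau}, as claimed.

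I do not expect a genuine obstacle here: the whole content is the observation that the quadratic penalty is radially constant, so fixing $\|\beta\|=\tau(\lambda)$ turns it into an additive constant that does not affect the argmin. The only points deserving care are the degenerate case $\tau(\lambda)=0$ (where $u^\star$ is not defined and must be excluded by hypothesis) and being explicit that the map $u\mapsto\tau(\lambda)u$ is a bijection onto the relevant sphere, so that optimality transfers between the two formulations.
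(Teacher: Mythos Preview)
Your proof is correct and follows essentially the same approach as the paper: both observe that the regularizer $\frac{\lambda}{2}\|\beta\|^2$ is constant on the sphere $\|\beta\|=\tau(\lambda)$, so the global minimizer of~\eqref{eqn:solveP} also minimizes the loss part over that sphere, and then reparametrize $\beta=\tau(\lambda)u$. The paper presents this as a bare chain of equalities, whereas you spell out the contradiction argument and flag the degenerate case $\tau(\lambda)=0$, but the content is the same.
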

\begin{proof}
	\ See Appendix. \\
\end{proof}	

In order to avoid heavy notation, we restrict the coefficients $c_h$ in~\eqref{eq:QP-pref} such that they are equal when the preference $b_h$ is the same, that is $c_h=\bar c_{b_h}$ where $\bar c_{-1},\bar c_0,\bar c_{1}$ are given positive weights.

Let us now focus on problem~\eqref{eqn:utau} and consider the joint p.d.f.
\begin{align} \label{eqn:distP}
p(\Phi,t;\bar c,\tau,u) = Z(\bar c,\tau,u)e^{-\bar c_{t}\ell_{t}(\tau \Phi'u)} \kappa(\Phi), 
\end{align} 
defined for $\Phi\in\mathbb{R}^N$ and $t\in\{-1,0,1\}$, and parametrized by $\bar c=[\bar c_{-1}\ \bar c_0\ \bar c_1]'$, a strictly positive scalar $\tau$, and a generic unit vector $u$. 

The distribution~\eqref{eqn:distP} is composed by three terms. The first term $Z(\bar c,\tau,u)$ is a normalization constant. We will show next that $Z(\bar c,\tau,u)$
does not depend on $u$ when we restrict $\|u\|=1$. The second term $e^{-\bar c_{t}\ell_{t}(\tau \Phi'u)}$ depends on all the parameters $(\bar c,\tau,u)$ and it is related to the objective function minimized in~\eqref{eqn:utau}.   The last term $\kappa(\Phi)$ ensures integrability of $p(\Phi,t;\bar c,\tau,u)$  and that the normalization constant $Z$ does not depend on $u$, as discussed next. A possible choice 
for $\kappa$ is 
 $\kappa(\Phi)=e^{-\Phi'\Phi}$.

The normalization constant $Z$ is given by
\begin{align}
Z(\bar c,\tau,u) = \frac{1}{\sum_{t\in\{-1,0,1\}}I_t(\bar c_t,\tau,u)}
\end{align}
where for $t\in\{-1,0,1\}$ the term $I_t(\bar c_t,\tau,u)$ is the integral defined as 
\begin{align} \label{eqn:inty}
I_t(\bar c_t,\tau,u) = \int_{ \Phi \in \mathbb{R}^N} e^{-\bar c_{t}\ell_{t}(\tau \Phi'u)}\kappa(\Phi)  d\Phi
\end{align}
The following Theorem shows that $I_t(\bar c_t,\tau,u)$ does not depend on $u$, and so $Z(\bar c,\tau,u)$ is also independent of $u$.
\begin{theorem} \label{Th:int}
	Let $\kappa(\Phi)$ in~\eqref{eqn:distP} be $\kappa(\Phi)=e^{-\Phi'\Phi}$.  For any $t \in\{-1, 0, 1\}$, 
	\begin{align}
	I_t(\bar c_t,\tau,u) = I_t(\bar c_t,\tau,\bar u) \ \ \forall u,\bar u: \|u\|=\|\bar u\|=1.  
	\end{align}
\end{theorem}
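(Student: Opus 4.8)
The plan is to exploit two structural facts about the integrand in~\eqref{eqn:inty}: the loss terms $\ell_t$ enter only through the scalar inner product $\Phi'u$, while the remaining factor $\kappa(\Phi)=e^{-\Phi'\Phi}$ together with the Lebesgue measure $d\Phi$ is invariant under orthogonal transformations of $\rr^N$. First I would record the observation that, for each $t\in\{-1,0,1\}$, the definitions of $\ell_{-1},\ell_0,\ell_1$ show that $\ell_t(\tau\Phi'u)$ depends on $\Phi$ only through the scalar $s:=\Phi'u$. Hence the integrand can be written as $g_t(\Phi'u)\,e^{-\Phi'\Phi}$, where $g_t(s):=e^{-\bar c_t\ell_t(\tau s)}$ is continuous and bounded; indeed $0<g_t\le 1$, since $\ell_t\ge 0$ and $\bar c_t>0$. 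This bound, combined with integrability of the Gaussian factor, guarantees that $I_t(\bar c_t,\tau,u)$ is a finite, well-defined integral for every unit vector $u$.

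Next, given two unit vectors $u,\bar u$ with $\|u\|=\|\bar u\|=1$, I would choose an orthogonal matrix $Q\in\rr^{N\times N}$ (so $Q'Q=I$) with $Qu=\bar u$; such a $Q$ exists because the orthogonal group acts transitively on the unit sphere, equivalently because each of $u,\bar u$ can be completed to an orthonormal basis, giving orthogonal matrices one of which maps the other. Applying the change of variables $\Phi=Q'\Psi$ in~\eqref{eqn:inty}, whose Jacobian satisfies $|\det Q'|=1$ so that $d\Phi=d\Psi$, I obtain $\Phi'\Phi=\Psi'QQ'\Psi=\Psi'\Psi$ and $\Phi'u=\Psi'Qu=\Psi'\bar u$. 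Substituting yields $I_t(\bar c_t,\tau,u)=\int_{\Psi\in\rr^N} g_t(\Psi'\bar u)\,e^{-\Psi'\Psi}\,d\Psi=I_t(\bar c_t,\tau,\bar u)$, which is precisely the claim.

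I do not expect a serious obstacle: the argument is essentially the rotational invariance of the standard Gaussian. The only points requiring a little care are (i) integrability, handled above via the bound $g_t\le 1$, and (ii) verifying that the reduction applies uniformly across $t$, including the case $t=0$ whose loss involves $\max\{0,\pm\Phi'u-\sigma\}$, i.e. $\max\{0,|\Phi'u|-\sigma\}$ — this is still a function of the single scalar $\Phi'u$, so the same substitution works verbatim. As a byproduct, taking $\bar u=e_1$ (the first coordinate unit vector) one may factor the integral as $I_t=\pi^{(N-1)/2}\int_{\rr}e^{-\bar c_t\ell_t(\tau s)}e^{-s^2}\,ds$, which both makes the independence of $u$ transparent and reduces $I_t$ to a one-dimensional integral. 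Finally, since $Z(\bar c,\tau,u)=1/\sum_{t}I_t(\bar c_t,\tau,u)$, independence of each $I_t$ of $u$ immediately gives independence of the normalization constant $Z$ as well.
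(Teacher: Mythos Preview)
Your proof is correct and follows essentially the same route as the paper: both exploit the rotational invariance of $\kappa(\Phi)=e^{-\Phi'\Phi}$ and perform a change of variables via an orthogonal matrix sending $u$ to $\bar u$, using that the integrand depends on $\Phi$ only through $\Phi'u$. Your version is slightly more thorough in checking integrability and in noting the one-dimensional reduction $I_t=\pi^{(N-1)/2}\int_{\rr}e^{-\bar c_t\ell_t(\tau s)}e^{-s^2}\,ds$, but the core argument is identical.
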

\begin{proof}\  See Appendix.\\
\end{proof}
Because of Theorem~\ref{Th:int}, since now on, when
we restrict $\|u\|=1$, we will drop the dependence on $u$ of $Z(\bar c,\tau,u)$ and simply write $Z(\bar c,\tau)$.

Let us assume that the samples of the training sequence 
$\{\Phi(\epsilon,X,h),b_h\}_{h=1}^M$ are i.i.d.\ and generated
from the joint distribution $p(\Phi,t;\bar c,\tau,u)$ defined in~\eqref{eqn:distP}.
The negative log of the probability of the dataset   $\{\Phi(\epsilon,X,x_{i(h)},x_{j(h)}),b_h\}_{h=1}^M$ given $\bar c,\tau,u$ is
\begin{align}
L(\bar c,\tau,u) = & -\sum_{h=1}^{M}\log p(\Phi(\epsilon,X,x_{i(h)},x_{j(h)}),b_h;\bar c,\tau,u) = \nonumber\\
= & -M\log Z(\bar c,\tau) - \sum_{h=1}^{M} \log \kappa(\Phi(\epsilon,X,x_{i(h)},x_{j(h)}))\nonumber\\& +  \sum_{h=1}^{M} \bar c_{b_h}\ell_{b_h}(\tau \Phi(\epsilon,X,x_{i(h)},x_{j(h)})'u)
\end{align} 
Thus, for fixed values of $\bar c$ and $\tau=\|\beta(\lambda)\|$,
by Theorem~\ref{Th:eqprob}  the minimizer $u^\star_L(\lambda)$
of
\[
    \min_{u:\ \|u\|=1}L(\bar c,\tau(\lambda),u)
\]
is $u^\star_L=\frac{\beta(\lambda)}{\tau(\lambda)}$.
In other words, for any fixed $\lambda>0$, the solution $\beta(\lambda)$
of the QP problem~\eqref{eq:QP-pref}
can be reinterpreted as $\tau$ times the maximizer $u^\star_L(\lambda)$ of the joint likelihood $L(\bar c,\tau,u)$   with respect to $u$, $\|u\|=1$,
when $\tau=\|\beta(\lambda)\|$.

It is interesting to note that the marginal p.d.f. derived from the probabilistic model~\eqref{eqn:distP} is equal to
\begin{align}
p(\Phi;\bar c,\tau,u) = \sum_{t=-1,0,1} p(\Phi,t;\bar c,\tau,u)=
Z(\bar c,\tau)\kappa(\Phi)\sum_{t=-1,0,1}e^{-\bar c_{t}\ell_{t}(\tau \Phi'u)}
\end{align}
and therefore the corresponding preference posterior probability is
\begin{align}
p(t|\Phi;\bar c,\tau,u) = \frac{p(\Phi,t;\bar c,\tau,u)}{p(\Phi;\bar c,\tau,u)} = \frac{  e^{-\bar c_{t}\ell_{t}(\Phi'\beta)}}{\displaystyle{\sum_{t=-1,0,1}e^{-\bar c_{t}\ell_{t}( \Phi'\beta)}}}
\label{eq:posterior}
\end{align}
where $\beta=\tau u$. 

The preference posterior probability given~\eqref{eq:posterior} can be 
used now to explore the vector space $\rr^n$, as we describe next.

Let $\beta$ be the vector obtained
by solving~\eqref{eq:QP-pref} with $N$ samples and $M$ preferences.
Let us treat again $x_{N+1}$ as a free sample $x$
to optimize and consider~\eqref{eq:posterior} also for the new
generic $(M+1)$th comparison
\[
    \Phi(\epsilon,X,x,x^\star(N))=\matrice{c}{\phi(\epsilon d(x,x_1))-\phi(\epsilon d(x^\star_N,x_1)\\
\ldots\\
\phi(\epsilon d(x,x_N))-\phi(\epsilon d(x^\star_N,x_N)
}
\]
A criterion to choose $x_{N+1}$ is to maximize the preference posterior probability  of obtaining a ``better'' sample compared to the current ``best'' sample $x^\star_N$ given by~\eqref{eq:posterior}, or equivalently of getting $\pref(x_{N+1},x^\star_N)=-1$.
This can be achieved by the following acquisition function
\begin{align}
    a(x)= & -p\left(t=-1\left|\Phi(\epsilon,X,x,x^\star_N);\bar c,\|\beta\|,\frac{\beta}{\|\beta\|}\right.\right) \nonumber \\
= & -\frac{  e^{-\bar c_{-1}\ell_{-1}(\Phi(\epsilon,X,x,x^\star_N)'\beta)}}{\displaystyle{\sum_{t=-1,0,1}e^{-\bar c_{t}\ell_{t}( \Phi(\epsilon,X,x,x^\star_N)'\beta)}}}
    \label{eq:acquisition-improvement-prob}
\end{align}

\begin{figure}[t]
\begin{center}{\includegraphics[width=\hsize]{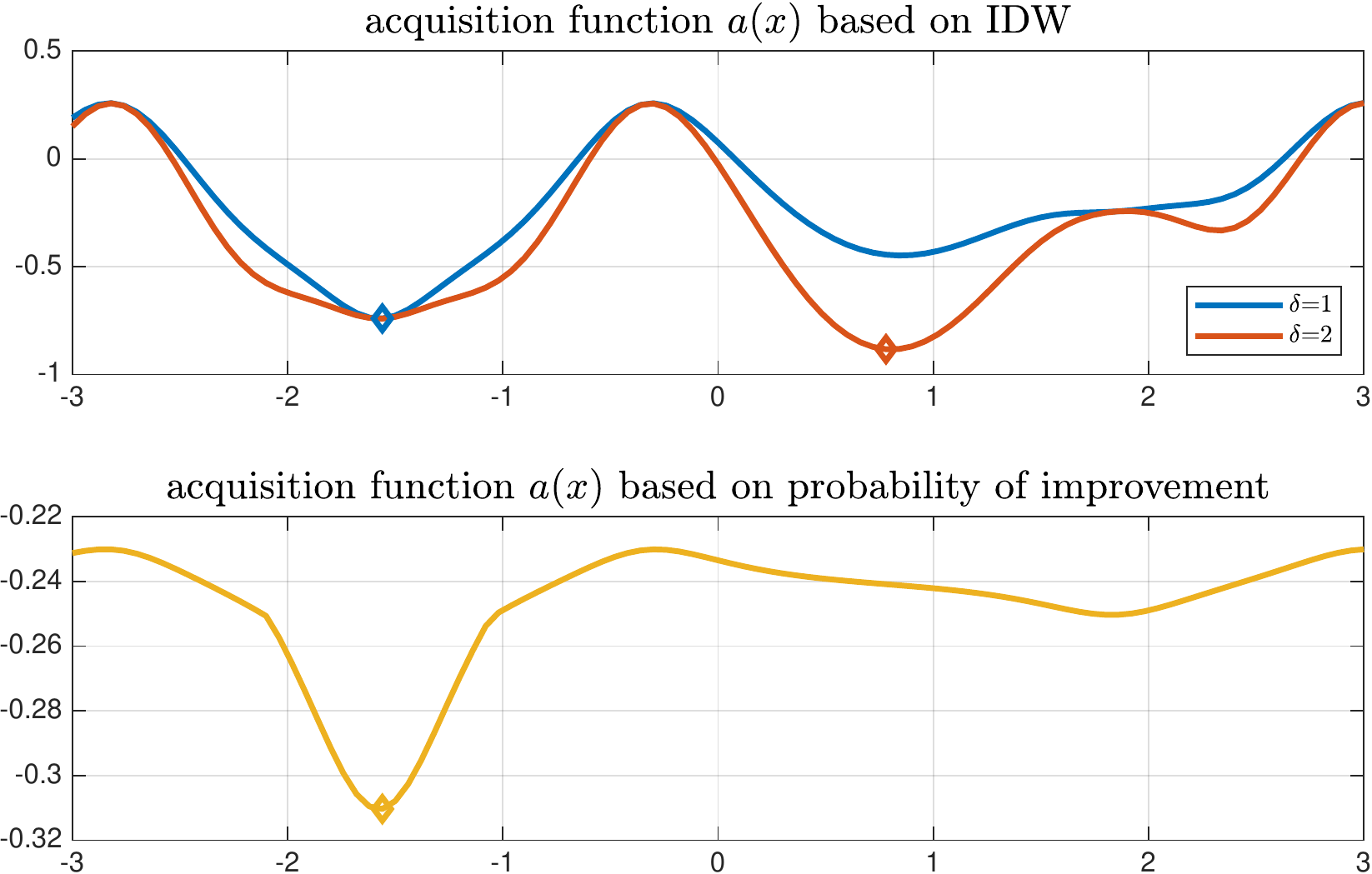}}\end{center}
\caption{Example of acquisition functions $a$  
based on preferences resulting from function $f$ as 
in~\eqref{eq:f-1d-example} and Figure~\ref{fig:surrogate}. 
IDW-based acquisition function $a$ as in~\eqref{eq:acquisition} with $\delta=1$ and $\delta=2$ (top plot), acquisition function $a$ as in~\eqref{eq:acquisition-improvement-prob} based in probability of improvement.
The minimum of $a$ is highlighted with a diamond.}
\label{fig:acquisition}
\end{figure}

Examples of acquisition functions $a$ constructed based on preferences
generated by the function $f$ defined in~\eqref{eq:f-1d-example}
are depicted in Figure~\ref{fig:acquisition}, based on the same
setting as in Figure~\ref{fig:surrogate}.

\subsection{Scaling}
\label{sec:scaling}
Different components $x^j$ of $x$ may have different upper and lower bounds
$u^j$, $\ell^j$. Rather than 
using weighted distances as in stochastic process model approaches
such as Kriging methods~\cite{SWMW89,JSW98}, we simply rescale the variables in optimization problem~\eqref{eq:glob-opt-pref}
to range in $[-1,1]$. As described in~\cite{Bem19}, we first tighten the given range
$B_{\ell,u}=\{x\in\rr^n: \ell\leq x\leq u\}$ by computing the bounding box $B_{\ell_s,u_s}$ of the set $\{x\in\rr^n:\ x\in\XX %g(x)\leq 0
\}$ and replacing $B_{\ell,u}$ with $B_{\ell_s,u_s}$.
The bounding box $B_{\ell_s,u_s}$ is obtained by solving the following $2n$
optimization problems
\begin{equation}
    \ba{rcl}
    \ell_s^i&=&\min_{\ell\leq x\leq u,\ x\in\XX%g(x)\leq 0
                } e_i'x\\
    u_s^i&=&\max_{\ell\leq x\leq u,\ x\in\XX%g(x)\leq 0
                } e_i'x
    \ea 
\label{eq:BBox}
\end{equation}
where $e_i$ is the $i$th column of the identity matrix, $i=1,\ldots,n$. Note that
Problem~\eqref{eq:BBox} is a linear programming (LP) problem in
case of linear inequality constraints $\XX=\{x\in\rr^n\ Ax\leq b\}$%$g(x)=Ax-b$
. Then, we operate with new scaled variables $\bar x\in\rr^n$, $\bar x_i\in[-1,1]$,
and replace the original preference learning problem~\eqref{eq:glob-opt-pref} with
\begin{equation}
    \mbox{find}\ \bar x^\star\ \mbox{such that}\ \pref(x(\bar x^\star),x(\bar x))\leq 0,\ \forall \bar x\in\XX_s,\ \ell_s\leq x(\bar x)\leq u_s
\label{eq:glob-opt-pref-scaled}
\end{equation}
where the scaling mapping $x:\rr^n\to\rr^n$ is defined as
\begin{equation}
    x^j(\bar x)=\frac{u_s^j-\ell_s^j}{2}\bar x^j+\frac{u_s^j+\ell_s^j}{2},\ j=1,\ldots,n
\label{eq:scaled-pref}
\end{equation}
where clearly $x^j(-1)=-1$, $x^j(1)=1$,
and $\XX_s$ %$g_s:\rr^n\to\rr$ 
is the set 
\begin{equation}
%    f_s(\bar x)&=&f(x(\bar x))    \\
    \XX_s=\{\bar x\in\rr^n:\ x(\bar x)\in\XX\} %g_s(\bar x)&=&g(x(\bar x))  
\label{eq:scaled_X}
\end{equation}
When $\XX$ is the polyhedron $\{x:\ Ax\leq b\}$,~\eqref{eq:scaled_X} 
corresponds to defining the new polyhedron
\begin{equation}
    \XX_s=\{\bar x:\ \bar A\bar x\leq \bar b\}
    %g_s(\bar x)=\bar A \bar x-\bar b
    \label{eq:gs_Ab_bar}
\end{equation}
where
\begin{equation}
    \ba{rcl}
       \bar A&=&A\diag(u_s-\ell_s)\\
        \bar b&=&b-A(\frac{u_s+\ell_s}{2})
    \ea 
\label{eq:Ab_bar}
\end{equation}
and $\diag(u_s-\ell_s)$ is the diagonal matrix whose diagonal elements
are the components of $u_s-\ell_s$. 

Note that in case the preference function $\pref$ is related 
to an underlying function $f$ as in~\eqref{eq:pref_fun-f}, 
applying scaling is equivalent to formulate the following
scaled preference function
\begin{equation}
    \pref(\bar x_1,\bar x_2)=\left\{\ba{ll}
    -1 &\mbox{if $f(x(\bar x_1))<f(x(\bar x_2))$}\\
    0 &\mbox{if $f(x(\bar x_1))=f(x(\bar x_2))$}\\
    1 &\mbox{if $f(x(\bar x_1))>f(x(\bar x_2))$}
    \ea\right.
\label{eq:pref_fun-f-scaled}
\end{equation}

\section{Preference learning algorithm}
\label{sec:algorithm}

Algorithm~\ref{algo:idwgopt-pref} summarizes the proposed approach to solve 
the optimization problem~\eqref{eq:glob-opt-pref} by preferences
using RBF interpolants~\eqref{eq:rbf}  and the acquisition functions defined in Section~\ref{sec:acquisition}.

\begin{algorithm}[ht!]
	\caption{Preference learning algorithm based on RBF+IDW acquisition function}
	\label{algo:idwgopt-pref}
	~~\textbf{Input}: Upper and lower bounds $(\ell,u)$, constraint set $\XX$;
	number $N_{\rm init}\geq 2$ of initial samples, number $N_{\rm max}\geq N_{\rm init}$ of maximum number of function 
	evaluations; $\delta\geq 0$; $\sigma>0$; $\epsilon > 0$; 
	self-calibration index set  $\mathcal{I}_{\rm{sc}} \subseteq \{1,\ldots,N_{\rm max}-1\}$.
	\vspace*{.1cm}\hrule\vspace*{.1cm}
	\begin{enumerate}[label*=\arabic*., ref=\theenumi{}]
		\item Tighten $(\ell,u)$ to $(\ell_s,u_s)$ as in~\eqref{eq:BBox};
		\item Scale problem as described in~Section~\ref{sec:scaling};
		\item \label{algo:latin} Generate $N_{\rm init}$ random samples $X=\{x_1,\ldots,x_{N_{\rm init}}\}$ using Latin hypercube sampling~\cite{MBC79};
				
		\item  $N\leftarrow 1$,  $i^\star\leftarrow 1$;
		\item \textbf{While} $N< N_{\rm max}$ \textbf{do}
		\begin{enumerate}[label=\theenumi{}.\arabic*., ref=\theenumi{}.\arabic*]
			\item \textbf{if} $N \geq N_{\rm init}$ \textbf{then}
					\begin{enumerate}[label=\theenumii{}.\arabic*., ref=\theenumii{}.\arabic*]
					\item \label{algo:AL1} \textbf{if} $N \in \mathcal{I}_{\rm sc}$ \textbf{then} recalibrate $\epsilon$ as described in Section~\ref{Self-cal};	
					\item  Solve~\eqref{eq:QP-pref} to define the surrogate function $\hat f$~\eqref{eq:rbf};
					\item \label{algo:acquisition} Define  acquisition function $a$ as in~\eqref{eq:acquisition} or~\eqref{eq:acquisition-improvement-prob};
					\item \label{algo:globopt} Solve global optimization problem~\eqref{eq:xNp1} and get $x_{N+1}$;	
					\end{enumerate}
			\item $i(N)\leftarrow i^\star$, $ j(N) \leftarrow  N+1$;
			\item Observe preference $b_N = \pref(x_{i(N)},x_{j(N)})$; 
			\item \textbf{if} $b_N=1$ \textbf{then set} $i^\star\leftarrow  j(N)$; 
			\item $N\leftarrow N+1$;
		\end{enumerate}
		\item \textbf{End}.
	\end{enumerate}
	\vspace*{.1cm}\hrule\vspace*{.1cm}
	~~\textbf{Output}: Global optimizer $x^\star=x_{i^\star}$.
\end{algorithm}

In Step~\ref{algo:latin} \emph{Latin Hypercube Sampling} (LHS)~\cite{MBC79} is used 
to generate the initial set $X$ of $N_{\rm init}$ samples. The generated samples may not satisfy the constraint
$x\in\XX$. We distinguish between two cases:
\begin{enumerate}
    \item [$i$)] the comparison $\pref(x_1,x_2)$ can be done even if $x_1\not\in\XX$ and/or $x_2\not\in\XX$;
    \item [$ii$)] $\pref(x_1,x_2)$ can only be evaluated if $x_1,x_2\in\XX$.
\end{enumerate}
In the first case, the initial comparisons are still useful to 
define the surrogate function. In the second case, a possible approach is to generate 
a number of samples larger than $N_{\rm init}$ and discard the samples $x_i\not\in\XX$.
An approach for performing this is suggested in~\cite[Algorithm~2]{Bem19}.

Step~\ref{algo:globopt} requires solving a global optimization problem. In this paper we 
use Particle Swarm Optimization (PSO)~\cite{Ken10,VV07} to solve problem~\eqref{eq:xNp1}.
Alternative global optimization methods such as DIRECT~\cite{Jon09} or others methods~\cite{HN99,RS13}
could be used to solve~\eqref{eq:xNp1}. Note that penalty functions can be used to
take inequality constraints~\eqref{eq:g(x)} into account, 
for example by replacing~\eqref{eq:xNp1} with 
\begin{equation}
    x_{N+1}=\arg\min_{\ell\leq x\leq u} a(x)+\rho\Delta \hat F\sum_{i=1}^q\max\{g_i(x),0\}^2
\label{eq:xNp1-penalty}
\end{equation}
where $\rho\gg 1$ in~\eqref{eq:xNp1-penalty}. 

Algorithm~\ref{algo:idwgopt-pref} consists of two phases: initialization and active learning. During initialization, sample  $x_{N+1}$ is simply retrieved from the initial set $X=\{x_1,\ldots,x_{N_{\rm init}}\}$. Instead, in the active learning phase, sample $x_{N+1}$ is obtained in Steps~\ref{algo:AL1}--\ref{algo:globopt} by solving the optimization problem~\eqref{eq:xNp1}.  Note that the construction of the acquisition function $a$ is rather heuristic,
therefore finding global solutions of very high accuracy of~\eqref{eq:xNp1}
is not required.

When using the acquisition function~\eqref{eq:acquisition},
the exploration parameter $\delta$ promotes sampling the space in $[\ell,u]\cap\XX$ in
areas that have not been explored yet. While setting $\delta\gg1$ makes
Algorithm~\ref{algo:idwgopt-pref} exploring the entire feasible region regardless
of the results of the comparisons, setting $\delta=0$ can make Algorithm~\ref{algo:idwgopt-pref} 
rely only on the surrogate function $\hat f$ and miss the global optimizer.
Note that using the acquisition function~\eqref{eq:acquisition-improvement-prob}
does not require specifying the hyper-parameter $\delta$. On the other hand,
the presence of the IDW function in the acquisition allows promoting an
exploration which is independent of the surrogate, and therefore $\delta$
might be a useful tuning knob to have. Clearly, the acquisition function~\eqref{eq:acquisition-improvement-prob} can be also augmented by the term $\delta z(x)$ as in~\eqref{eq:acquisition} to recover such exploration flexibility.

Figure~\ref{fig:surrogate} (upper plot) shows the samples generated by
Algorithm~\ref{algo:idwgopt-pref} when applied to minimize the function $f$~\eqref{eq:f-1d-example} in $[-3,3]$, by setting $\delta=1$, $N_{\rm max}=6$, $N_{\rm init}=3$, $\mathcal{I}_{\rm sc}=\emptyset$, 
$\Psi$ generated by the inverse quadratic RBF with $\epsilon=2$, and $\sigma=\frac{1}{N_{\rm max}}$.

\subsection{Computational complexity}
Algorithm~\ref{algo:idwgopt-pref} solves $N_{\rm max}-N_{\rm init}$
quadratic or linear programs~\eqref{eq:QP-pref} with growing size, namely
with $2N-1$ variables, a number $q$ of linear inequality constraints
with $N-1\leq q\leq 2(N-1)$ depending on the outcome of the preferences,
and $2$ equality constraints. Moreover, it solves
$N_{\rm max}-N_{\rm init}$ global optimization problems~\eqref{eq:xNp1} in
the $n$-dimensional space, whose complexity depends on the used global optimizer. 
The computation of matrix $\Psi$ requires overall
$N_{\rm max}(N_{\rm max}-1)$ RBF values $\phi(\epsilon d(x_i,x_j))$,
$i,j=1,\ldots,N_{\rm max}$, $j\neq i$. The leave-one-out cross validation executed at Step~\ref{algo:AL1} for recalibrating $\epsilon$ requires to formulate and solve problem~\eqref{eq:QP-pref} at most $N-1$ times.
On top of the above analysis, one has to take account the cost of evaluating
the preferences $\pref(x_{i(h)},x_{j(h)})$, $h=1,\ldots,N_{\rm max}-1$.

\section{Application to multi-objective optimization} 
\label{sec:multiobj}
The active preference learning methods introduced in the previous sections can be effectively used to solve multi-objective optimization problems of the form
\begin{subequations}
\begin{eqnarray}
    \min_z &&F(z)=\matrice{c}{F_1(z)\\ \vdots\\ F_{n}(z)}\label{eq:multi-obj-fun}\\[.5em]
    \st && g(z)\leq 0
\label{eq:multi-obj-constr}
\end{eqnarray}
\label{eq:multi-obj}%
\end{subequations}
where $z\in\rr^{n_z}$ is the optimization vector, $F_i:\rr^{n_z}\to\rr$, $i=1,\ldots,n$,
are the objective functions, $n\geq 2$, and $g:\rr^{n_z}\to\rr^{n_g}$ is the function defining
the constraints on $z$ (including possible box and linear constraints). In general Problem~\eqref{eq:multi-obj} admits infinitely many 
Pareto optimal solutions, leaving the selection of one of them a matter of \emph{preference}.

Pareto optimal solutions can be expressed by \emph{scalarizing} problem~\eqref{eq:multi-obj}
into the following standard optimization problem
\begin{subequations}
\begin{eqnarray}
    F^\star(x)=\min_z &&\sum_{i=1}^{n}x_iF_i(z)\label{eq:scal-multi-obj-fun}\\[.5em]
    \st && g(z)\leq 0
\end{eqnarray}
\label{eq:scal-multi-obj}%
\end{subequations}
where $x_1,\ldots,x_{n}$ are nonnegative scalar weights, and $F^\star:\rr^n\to\rr\cup\{-\infty\}$.
Let us model the preference between Pareto optimal solution through the preference function
$\pref:\rr^{n}\times\rr^n\to\{-1,0,1\}$ 
\begin{equation}
    \pref(x,y)=\left\{\ba{ll}
    -1 &\mbox{if $F^\star(x)$ is ``better'' than $F^\star(y)$}\\
    0 &\mbox{if $F^\star(x)$ is ``as good as'' than $F^\star(y)$}\\
    1 &\mbox{if $F^\star(y)$ is ``better'' than $F^\star(x)$}
    \ea\right.
\label{eq:mo-pref_fun}
\end{equation}
where $x,y\in\rr^n$. The optimal selection of a Pareto optimal solution can be therefore expressed 
as a preference optimization problem of the form~\eqref{eq:glob-opt-pref}, with $\ell=0$, $u=+\infty$,
$\XX=\rr^n$. 

Without loss of generality, we can set $\sum_{i=1}^{n}x_i=1$ and eliminate 
$x_{n}=1-\sum_{i=1}^{n-1}x_i$, so to solve a preference optimization problem with $n-1$
variables under the constraints  $x_i\geq 0$, $\sum_{i=1}^{n-1}x_i\leq 1$. 
In Section~\ref{sec:mo-results} we will illustrate the effectiveness of the active preference learning
algorithms introduced earlier in solving the multi-objective optimization problem~\eqref{eq:multi-obj}
under the preference function~\eqref{eq:mo-pref_fun}.

\section{Numerical results}
\label{sec:results}
In this section we test the active preference learning approach described in the previous sections 
on different optimization problems, only based on preference queries.

Computations are performed on an Intel i7-8550 CPU @1.8GHz machine in MATLAB R2019a.
Both Algorithm~\ref{algo:idwgopt-pref} and the Bayesian active preference learning algorithm
are run in interpreted code. Problem~\eqref{eq:xNp1} (or~\eqref{eq:xNp1-penalty}, in case of constraints) is solved by 
the PSO solver~\cite{VV09}. For judging the quality of 
the solution obtained by active preference learning, the best between the solution obtained
by running the optimization algorithm DIRECT~\cite{Jon09} through the NLopt interface~\cite{NLopt} 
and by running the PSO solver~\cite{VV09} was used as the reference global optimum.
The Latin hypercube sampling function~\texttt{lhsdesign} of the Statistics and Machine Learning Toolbox
of MATLAB is used to generate initial samples.

\subsection{Illustrative example}
We first illustrate the behavior of Algorithm~\ref{algo:idwgopt-pref} when solving 
the following constrained benchmark global optimization problem proposed by Sasena et al.~\cite{SPG02}:
\begin{equation}
    \ba{rl}
\min & \displaystyle{2+\frac{1}{100}(x_2-x_1^2)^2+(1-x_1)^2+2(2-x_2)^2+7\sin(\frac{1}{2}x_1)\sin(\frac{7}{10}x_1x_2)}
\\[1em]
\st &\displaystyle{-\sin(x_1-x_2-\frac{\pi}{8})\leq 0}\\[1em]
    &0\leq x_1,x_2\leq 5
    \ea    
\label{eq:sasena}
\end{equation}
The minimizer of problem~\eqref{eq:sasena} is $x^\star=[2.7450\ 2.3523]'$
with optimal cost $f^\star=-1.1743$. Algorithm~\ref{algo:idwgopt-pref} is run
with initial parameter $\epsilon=1$ and inverse quadratic RBF to fit 
the surrogate function, using the acquisition criterion~\eqref{eq:acquisition}
with $\delta=1$, $N_{\rm max}=25$, $N_{\rm init}=8$ feasible initial samples,
$\sigma=1$. Self-calibration is executed at steps $N$
indexed by $\mathcal{I}_{\rm{sc}}=\{8,12,17,21\}$ over a grid of $10$ values $\epsilon_\ell=\epsilon\theta_\ell$,
$\theta_\ell\in\Theta$, $\Theta=\{10^{-1+\frac{1}{5}(\ell-1)}\}_{\ell=1}^{10}$.

Figures~\ref{fig:sasena-1} shows the samples $X=\{x_1,\ldots,x_{N_{\rm max}}\}$
generated by a run of Algorithm~\ref{algo:idwgopt-pref}, 
Figure~\ref{fig:sasena-3} the best (unmeasured) value of the latent function $f$ as a function of the number of preference queries, Figure~\ref{fig:sasena-2} the shapes of $f$ and of the surrogate function $\hat f$. It is apparent that while $\hat f$ achieves the goal of driving the algorithm towards the global minimum, its shape is quite different from $f$, as it has been
constructed only to honor the preference constraints~\eqref{eq:RBF-pref} 
at sampled values. Therefore, given a new pair of samples $x_1$, $x_2$ 
that are located far away from the collected samples $X$, 
the surrogate function $\hat f$ may not be useful in predicting the outcome of the 
comparison $\pref(x_1,x_2)$.

It is apparent that $\hat f$ can be arbitrarily scaled and shifted without changing
the outcome of preferences. 
While the arbitrariness in scaling is taken into account by
the term $\Delta\hat F$ in~\eqref{eq:acquisition}, it would be
immediate to modify problem~\eqref{eq:QP-pref} to include the equality constraint
\begin{equation}
    \sum_{j=1}^N \phi(\epsilon d(x_{i^\star},x_{j}))\beta_j=0
    \label{eq:RBF-normalize}
\end{equation}
so that by construction $\hat f$ is zero at the current best sample $x_{i^\star}$.

\begin{figure}[t]
\centerline{\includegraphics[width=.8\hsize]{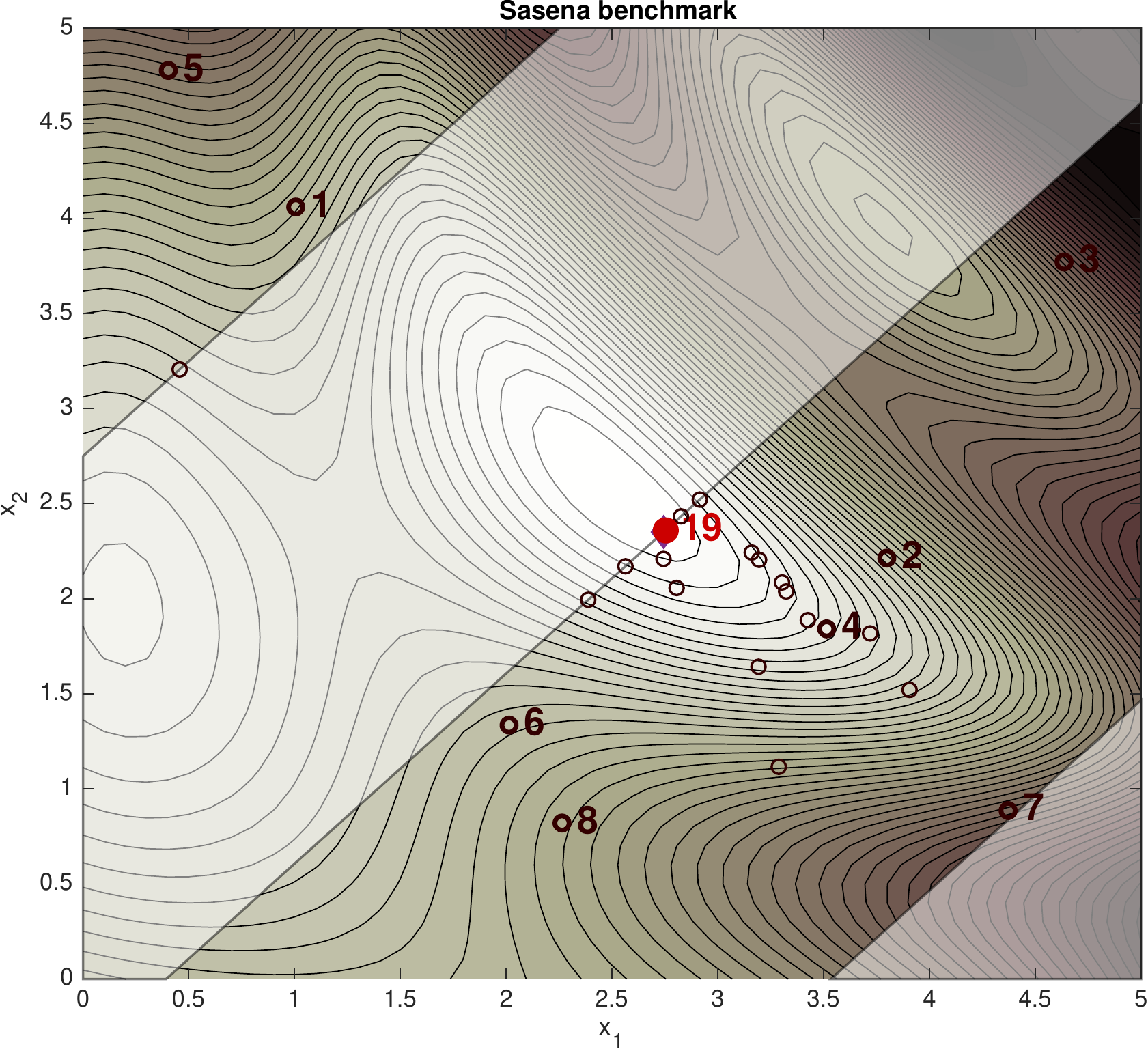}}
\caption{Level sets of the latent function $f$ and feasible domain
defined in~\eqref{eq:sasena}. The points $X$ generated by Algorithm~\ref{algo:idwgopt-pref}
accumulate towards the global constrained minimum as $N$ grows.}
\label{fig:sasena-1}
\end{figure}

\begin{figure}[t]
\centerline{\includegraphics[width=.8\hsize]{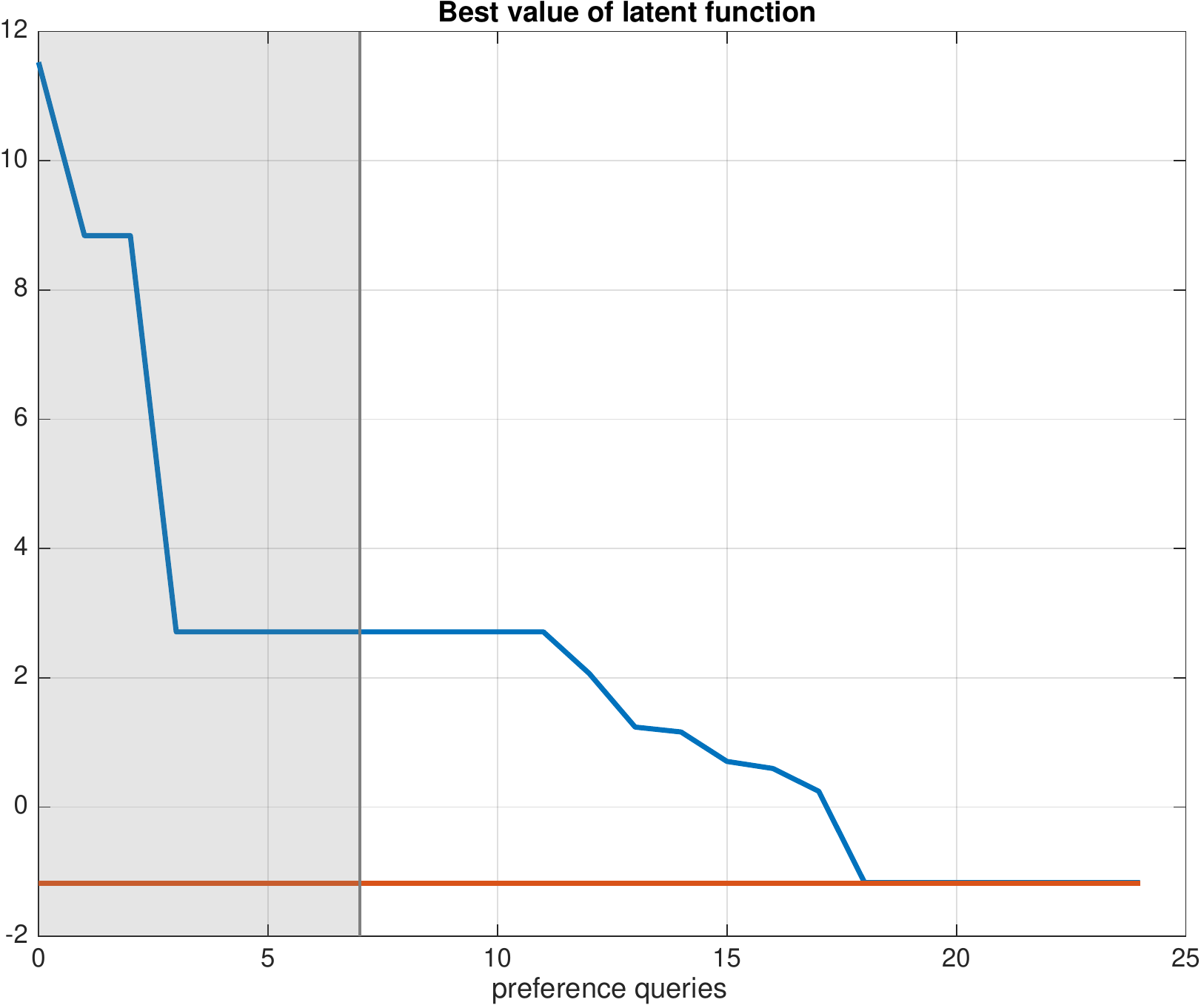}}
\caption{Best value of the latent function $f$ defined in~\eqref{eq:sasena}
as a function of the number of queried preferences. The vertical line 
denotes the last query after which active preference learning begins.}
\label{fig:sasena-3}
\end{figure}

\begin{figure}[t]
\centerline{\includegraphics[width=1.2\hsize]{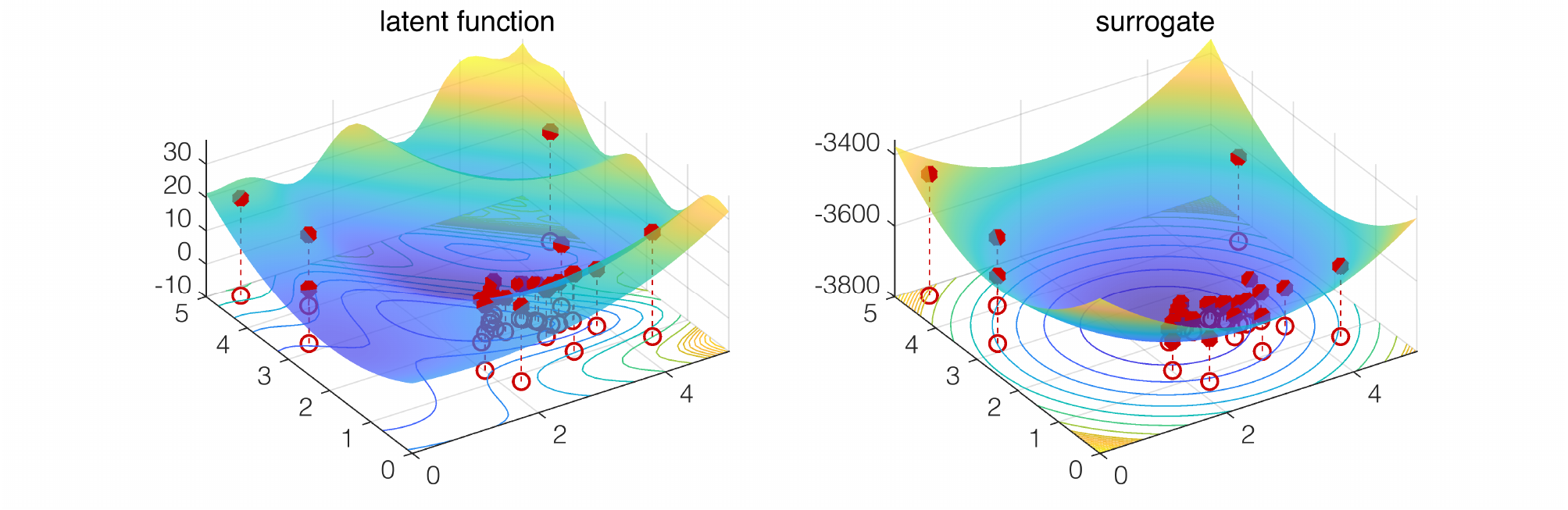}}
\caption{Latent function $f$ and surrogate $\hat f$ from the problem
defined in~\eqref{eq:sasena}, along with the samples $X$ (red circles) generated by
Algorithm~\ref{algo:idwgopt-pref}.}
\label{fig:sasena-2}
\end{figure}

\subsection{Benchmark global optimization problems}
\label{sec:benchmarks}
We test the proposed global optimization algorithm on standard benchmark global
optimization problems.
Problems \textsf{brochu-2d}, \textsf{brochu-4d}, \textsf{brochu-6d} were proposed  
in~\cite{BDG08} and are defined as follows:
\[
  \begin{array}{rcl}
f_d(x)&= & \displaystyle{\sum_{i=1}^d\sin(x_i)+\frac{1}{3}x_i+\sin(12 x_i)}\\[1em]
    f_{\textsf{brochu-2d}}(x)&= & -\max\{f_2(x)-1,0\}\\
    f_{\textsf{brochu-4d}}(x)&= & -f_4(x)\\
    f_{\textsf{brochu-6d}}(x)&= & -f_6(x)
  \end{array}  
\]
with $x\in[0,1]^d$, where the minus sign is introduced as we minimize the latent function, while
in~\cite{BDG08} it is maximized. For the definition of the remaining benchmark functions and 
associated bounds on variables the reader is referred to~\cite{Bem19,JY13}.

In all tests, the inverse quadratic RBF with initial parameter $\epsilon=1$ is used
in Algorithm~\ref{algo:idwgopt-pref}, with $\delta=2$ in~\eqref{eq:acquisition},
$N_{\rm init}=\lceil \frac{N_{\rm max}}{3} \rceil$ initial feasible samples
generated by Latin Hypercube Sampling as described in~\cite[Algorithm~2]{Bem19}, and
$\sigma=\frac{1}{N_{\rm max}}$. Self-calibration is executed at steps $N$
indexed by $\mathcal{I}_{\rm{sc}}=\{N_{\rm init},N_{\rm init}+\lceil\frac{N_{\rm max}-N_{rm init}}{4}\rceil,
N_{\rm init}+\lceil\frac{N_{\rm max}-N_{\rm init}}{2}\rceil,N_{\rm init}+\lceil\frac{3(N_{\rm max}-N_{\rm init})}{4}\rceil\}$ over a grid of $10$ values $\epsilon_\ell=\epsilon\theta_\ell$,
$\theta_\ell\in\Theta$, $\ell=1,\ldots,10$, with the same set $\Theta$ used 
to solve problem~\eqref{eq:sasena}.

For comparison, the benchmark problems are also solved by the Bayesian active preference learning algorithm described in~\cite{BDG08}, which is based on a Gaussian Process (GP) approximation of the posterior distribution of the latent preference function $f$. The posterior GP is computed by considering a zero-mean Gaussian process prior, where the prior covariance between the values of the latent function at the two different inputs $x \in \mathbb{R}^n$ and  $y \in \mathbb{R}^n$ is defined by the squared exponential kernel
\begin{align}
\mathcal{K}(x,y) = \sigma_f^2 e^{\frac{    \left\|x-y\right\|^2  }{2\sigma_l^2}}
\end{align}
where $ \sigma_f$ and $\sigma_l$ are positive hyper-parameters. The likelihood describing the observed preferences is constructed by considering the following probability description of the  preference   $\pref(x,y)$:
\begin{align}
\prob(\pref(x,y)|f(x), f(y)) = \left\{\ba{ll} Q\left(\frac{f(y)-f(x)}{\sqrt{2} \sigma_e}\right) & \mbox{if}\ \pref(x,y)=-1 \\
Q\left(\frac{f(x)-f(y)}{\sqrt{2} \sigma_e}\right) &  \mbox{if}\ \pref(x,y)=1 \ea \right.
\end{align}
where $Q$ is the cumulative distribution of the standard Normal   distribution, and $\sigma_e$ is the standard deviation of a zero-mean Gaussian noise which is introduced as a contamination term on the latent function $f$ in order to allow some tolerance on  the preference relations (see~\cite{CG05} for details).  The preference relation $\pref(x,y)=0$ is treated as two independent  observations with preferences $\pref(x,y)=-1$ and $\pref(x,y)=1$. The hyper-parameters $\sigma_f$ and  $\sigma_l$, as well as the noise standard deviation $\sigma_e$, are computed by maximizing the \emph{probability of the evidence}~\cite[Section 2.2]{CG05}. For a fair comparison with the RBF-based algorithm in this paper, these hyper-parameters are re-computed at the  steps indexed by $\mathcal{I}_{\rm{sc}}$. Furthermore, the same number $N_{\rm init}$ of initial feasible samples is generated using Latin hypercube sampling~\cite{MBC79}.

Algorithm~\ref{algo:idwgopt-pref} is executed using both the acquisition
function~\eqref{eq:acquisition} (\textsf{RBF+IDW}) 
and~\eqref{eq:acquisition-improvement-prob} (\textsf{RBF+PI}),
and results compared against those obtained by Bayesian active preference
learning (\textsf{PBO}), using the \emph{expected improvement} as an acquisition function~\cite[Sec. 2.3]{BDG08}.     Results are plotted
in Figures~\ref{fig:benchmarks-1} and~\ref{fig:benchmarks-2}, where
the median performance and the band defined by the best- and worst-case instances
over $N_{\rm test}=20$ runs is reported as a function of the number of queried preferences.
The vertical line represents the last query $N_{\rm init}-1$ at which active preference learning begins.
The average CPU time spent on solving each benchmark problem
is reported in Table~\ref{tab:benchmarks}.

\begin{figure}[p]
\centerline{\includegraphics[width=\hsize]{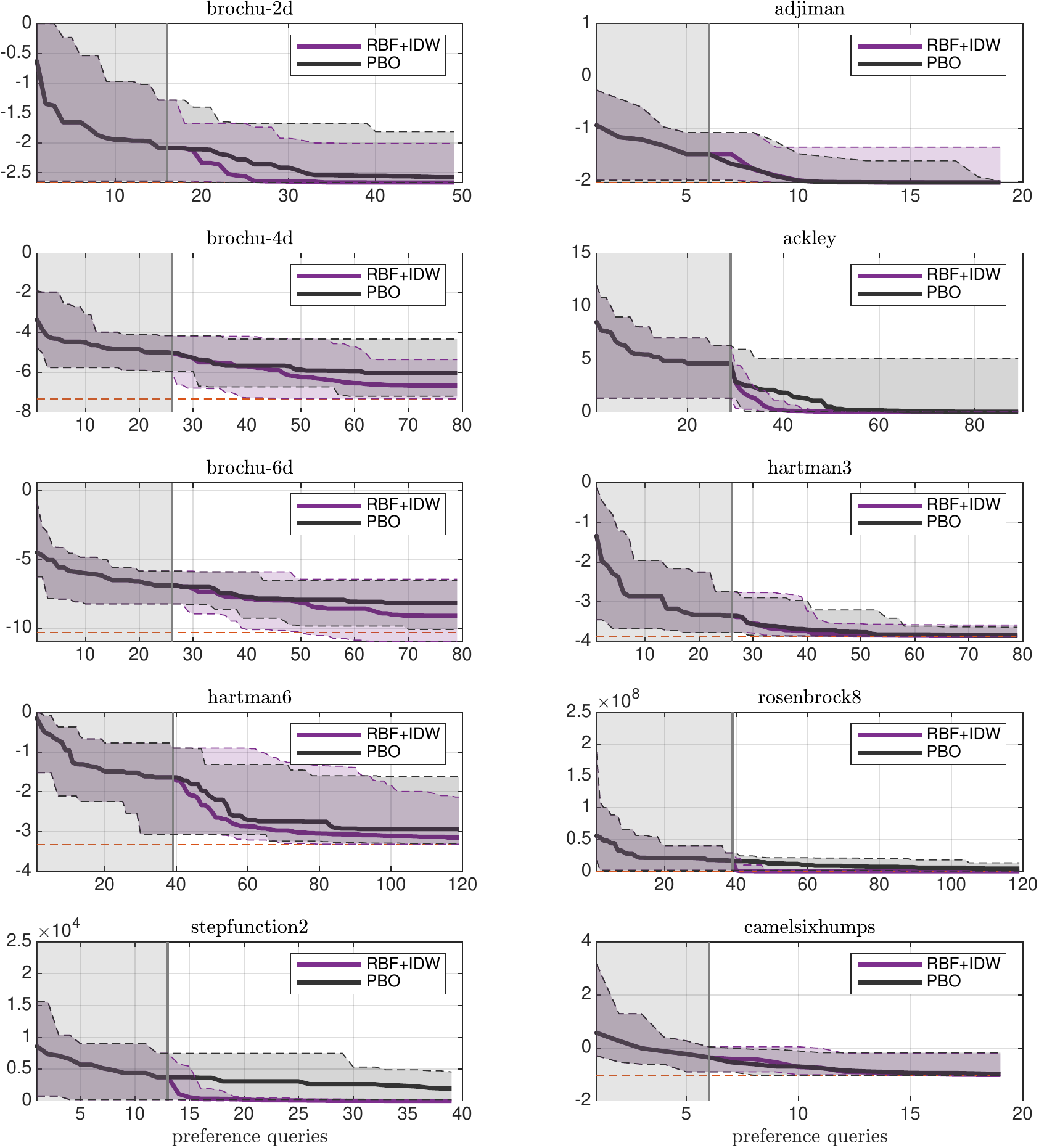}}
\caption{Comparison between Algorithm~\ref{algo:idwgopt-pref} 
based on IDW acquisition~\eqref{eq:acquisition} (\textsf{RBF+IDW}) and
Bayesian preference learning (\textsf{PBO}) on benchmark problems:
median (thick line) and best/worst-case band over $N_{\rm test}=20$ tests. }
\label{fig:benchmarks-1}
\end{figure}

\begin{figure}[p]
\centerline{\includegraphics[width=\hsize]{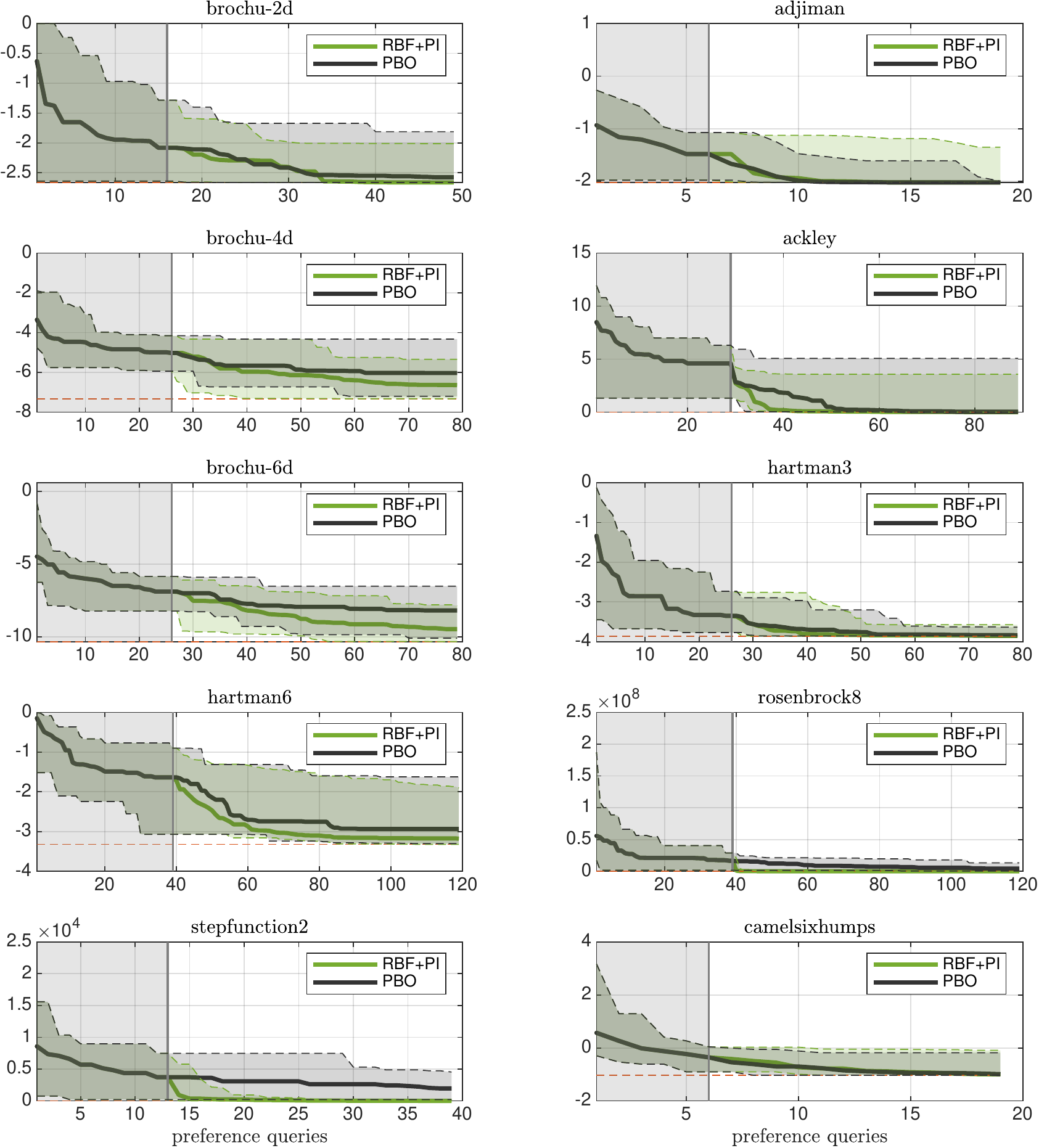}}
\caption{Comparison between Algorithm~\ref{algo:idwgopt-pref} 
based on probability of improvement~\eqref{eq:acquisition-improvement-prob} (\textsf{RBF+PI}) and
Bayesian preference learning (\textsf{PBO}) on benchmark problems:
median (thick line) and best/worst-case band over $N_{\rm test}=20$ tests.. }
\label{fig:benchmarks-2}
\end{figure}

It is apparent that, compared to \textsf{PBO}, the \textsf{RBF+IDW} and \textsf{RBF+PI} 
algorithms perform better in approaching the minimum of the latent function 
and are computationally lighter. The \textsf{RBF+IDW} and \textsf{RBF+PI} algorithms
have instead similar performance and computational load. 
Around 40 to 80\% of the CPU time is spent in self-calibrating $\epsilon$ as 
described in Section~\ref{Self-cal}.

\begin{table}[t]
\begin{center}
\begin{tabular}{l|r|r|r|r}
    problem & $n$ & \textsf{RBF+IDW} & \textsf{RBF+PI} & \textsf{PBO}\\[.5em]\hline
    \textsf{brochu-2d} &  2 & 5.9 & 6.0 & 18.5\\[.5em]\hline
    \textsf{adjiman} &  2 & 1.2 & 1.2 & 13.3\\[.5em]\hline
    \textsf{brochu-4d} &  4 & 21.1 & 21.4 & 30.7\\[.5em]\hline
    \textsf{ackley} &  2 & 30.8 & 30.9 & 51.2\\[.5em]\hline
    \textsf{brochu-6d} &  6 & 20.3 & 22.5 & 32.3\\[.5em]\hline
    \textsf{hartman3} &  3 & 19.7 & 20.4 & 27.2\\[.5em]\hline
    \textsf{hartman6} &  6 & 57.6 & 61.5 & 60.6\\[.5em]\hline
    \textsf{rosenbrock8} &  8 & 68.1 & 70.1 & 306.4\\[.5em]\hline
    \textsf{stepfunction2} &  4 & 4.2 & 4.3 & 45.2\\[.5em]\hline
    \textsf{camelsixhumps} &  2 & 1.2 & 1.2 & 14.6\\
\end{tabular}
\caption{CPU time (s) spent for solving each benchmark problem
considered in the comparison, averaged over $N_{\rm test}=20$ runs.}
\end{center}
\label{tab:benchmarks}
\end{table}

\subsection{Multi-objective optimization by preferences}
\label{sec:mo-results}
We consider the following multi-objective optimization problem
\begin{subequations}
\begin{eqnarray}
    \min_z &&F(z)=\matrice{c}{(2z_1\sin z_2-3\cos(z_1z_2))^2\\
                z_3^2(z_1+z_2)^4\\
                (z_1+z_2+z_3)^2}\label{eq:multi-obj-ex-fun}\\[.5em]
    \st && -1\leq z_i\leq 1,\ i=1,2,3
     \label{eq:multi-obj-ex-constr}
\end{eqnarray}
\label{eq:multi-obj-ex}%
\end{subequations}
Let assume that the preference is expressed by a decision maker in terms of ``similarity'' of
the achieved optimal objectives, that is a Pareto optimal
solution is ``better'' than another one if the objectives $F^\star_1,F^\star_2,F^\star_3$
are closer  to each other. In our numerical tests we therefore mimic the
decision maker by defining a synthetic preference function $\pref$ as in~\eqref{eq:pref_fun-f} 
via the following latent function $f:\rr^n\to\rr$
\begin{equation}
    f(x)=\left\|\matrice{c}{F^\star_1(x)-F^\star_2(x)\\F^\star_1(x)-F^\star_3(x)\\F^\star_2(x)-F^\star_3(x)}\right\|
\label{eq:mo-latent-fcn}
\end{equation}
As we have three objectives, we only optimize over $x_1,x_2$ and set $x_3=1-x_1-x_2$,
under the constraints $x_1,x_2\geq 0$, $x_1+x_2\leq 1$.

Figure~\ref{fig:multi-obj} shows the results obtained by running $N_{\rm test}=20$ times Algorithm~\ref{algo:idwgopt-pref} 
with $\delta=2$, $\epsilon=1$, and the same other settings as in the benchmarks
examples described in Section~\ref{sec:benchmarks}. The optimal scalarization coefficients returned by the algorithm are $x^\star_1=0.2857$, $x_2^\star=0.1952$ and $x_3^\star=1-x_1^\star-x_2^\star=0.5190$, that lead to $F^\star(x^\star)=[1.3921\ 1.3978\ 1.3895]'$. The latent function~\eqref{eq:mo-latent-fcn} 
optimized by the algorithm is plotted in Figure~\ref{fig:multi-obj-latent-fcn}. Note that 
the optimal multi-objective $F^\star$ achieved by setting $x_1=x_2=x_3=\frac{1}{3}$, corresponding
to the intuitive assignment of equal scalarization coefficients, leads to the much
worse result $F^\star=[0.2221\ 0.2581\ 2.9026]'$.

\begin{figure}[t]
\centerline{\includegraphics[width=\hsize]{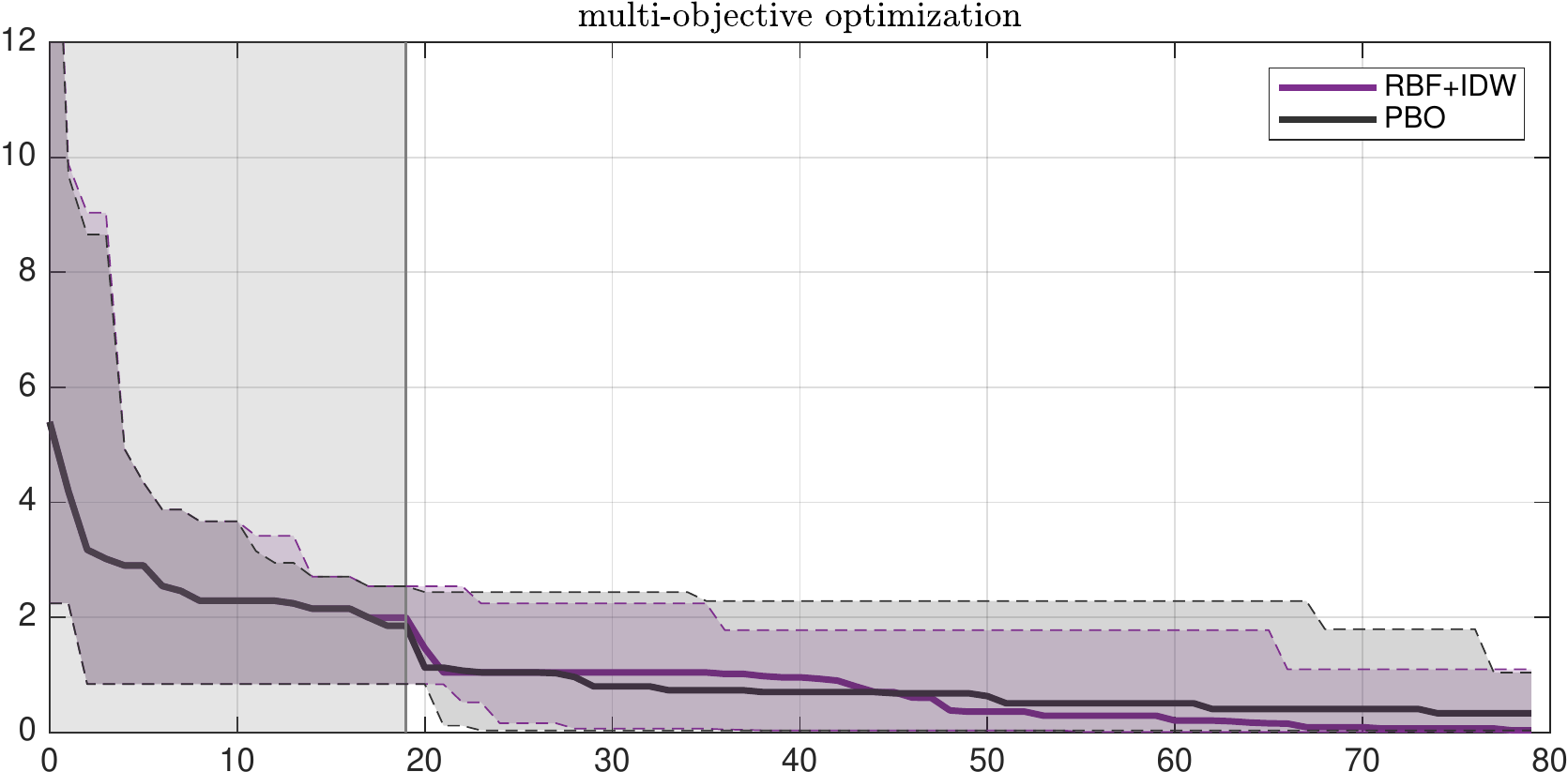}}
\caption{Multi-objective optimization example: median (thick line) and best/worst-case band over $N_{\rm test}=20$ tests of latent function~\eqref{eq:mo-latent-fcn} as a function of
    queried preferences.}
\label{fig:multi-obj}
\end{figure}

\begin{figure}[t]
\centerline{\includegraphics[width=.6\hsize]{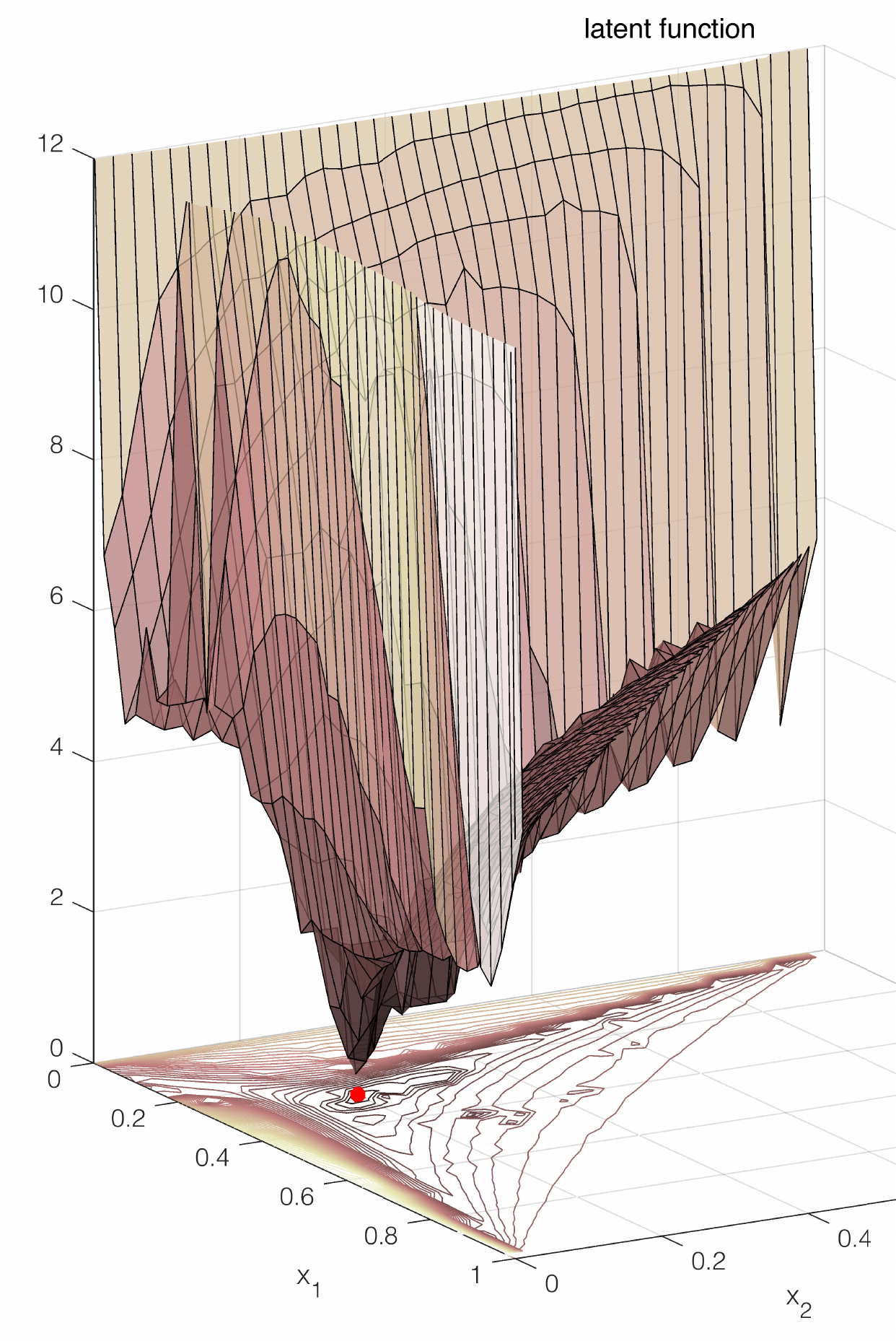}}
\caption{Multi-objective optimization example: latent function~\eqref{eq:mo-latent-fcn}.}
\label{fig:multi-obj-latent-fcn}
\end{figure}

\subsection{Choosing optimal cost-sensitive classifiers via preferences}
We apply now the active preference learning algorithm to solve the problem of choosing optimal   classifiers for object recognition from images when  different costs are associated to different types of misclassification errors. 

A four-class  \emph{convolutional neural network} (CNN) classifier with 3 hidden layers and   a \emph{soft-max} output  layer  is trained using 20000 samples, which consist of  all and only the images of the \emph{CIFAR-10} dataset~\cite{CIFAR} labelled as: \textsf{automobile}, \textsf{deer}, \textsf{frog}, \textsf{ship}, that are referred in the following as classes $\mathcal{C}_1$, $\mathcal{C}_2$, $\mathcal{C}_3$, $\mathcal{C}_4$, respectively. The  network is trained  in 150 epochs using the Adam algorithm~\cite{KB15} and batches of  size 2000, achieving an accuracy of 81\% over a validation dataset of 4000 samples.

We assume that   a decision maker associates different costs to misclassified objects and the predicted class  of an image $\mathcal{U}$   is computed as 
\begin{align} \label{eqn:predclass}
\hat{\mathcal{C}} = \arg\max_{i = \{1,2,3,4\}} x_ip(\mathcal{C}_i|\mathcal{U}) 
\end{align}
where $p(\mathcal{C}_i|\mathcal{U})$ is the network's confidence (namely, the output of the softmax layer) that the image $\mathcal{U}$ is in class $\mathcal{C}_i$, and $x_i$ are nonnegative weights to be tuned in order to take into account the preferences of the  decision maker.  As for the multi-objective
optimization example of Section~\ref{sec:mo-results}, without loss of generality we  set $\sum_{i=1}^{4}x_i=1$ and the constraints  $x_i\geq 0$, $\sum_{i=1}^{3}x_i\leq 1$, thus  eliminating the variable 
$x_{4}=1-\sum_{i=1}^{3}x_i$.

In our numerical tests we  mimic the preferences expressed by the 
decision maker by defining the synthetic  preference function $\pref$ as in~\eqref{eq:pref_fun-f},
where the (unknown) latent function $f:\rr^n\to\rr$ is defined as
\begin{equation}
f(x,d)= \left(1+d\right)\sum_{i=1}^4 \sum_{i=1}^4 C(i,j)r(i,j,x)
\label{eq:NN-latent-fcn}
\end{equation}
In~\eqref{eq:NN-latent-fcn}, the term $r(i,j,x)$  is the number of samples in the validation set of actual class $\mathcal{C}_i$ that are predicted as class $\mathcal{C}_j$  according to the decision rule~\eqref{eqn:predclass}, while  $C(i,j)$ is the cost of  misclassifying a sample of actual class $\mathcal{C}_i$ as class $\mathcal{C}_j$.    The considered costs are reported in Table~\ref{Tab:cost}, which describes  the behaviour of the decision maker in associating a higher cost  in  misclassifying \textsf{automobile} and  \textsf{ship}  rather  than  misclassifying \textsf{deer} and \textsf{frog}. In~\eqref{eq:NN-latent-fcn},  $d$ is a random variable uniformly distributed between $-0.15$ and $0.15$ and it is introduced to  represent a possible inconsistency in the preferences made by the user. 

Figure~\ref{fig:NN} shows the results obtained by running $N_{\rm test}=30$ times Algorithm~\ref{algo:idwgopt-pref} 
with $\delta=2$, $\epsilon=1$, $N_{\rm init}=10$, and the same other settings as in the benchmarks examples described in Section~\ref{sec:benchmarks}, and by running preference-based Bayesian optimization. The optimal weights  returned by the algorithm after evaluating $N_{\rm max}=40$ samples  are $x^\star_1=0.3267$, $x_2^\star=0.1613$, $x_3^\star=0.1944$ and $x_4^\star=1-x_1^\star-x_2^\star-x_3^\star=0.3176$, that lead to a noise-free cost $f(x^\star,0)$ in~\eqref{eq:NN-latent-fcn} equal to $2244$ (against   $f(x^\star,0)=$2585 obtained for unweighted costs, namely, for $x_1=x_2=x_3=x_4=0.25$). As expected, higher weights are associated to \textsf{automobile} and \textsf{ship} (class $\mathcal{C}_1$ and $\mathcal{C}_4$, respectively).    For judging the quality of 
the computed solution, the minimum of the   noise-free cost $f(x^\star,0)=2201$ is  computed by PSO
and used as the reference global optimum. 

\begin{figure}[t]
    \begin{center}{\includegraphics[width=\hsize]{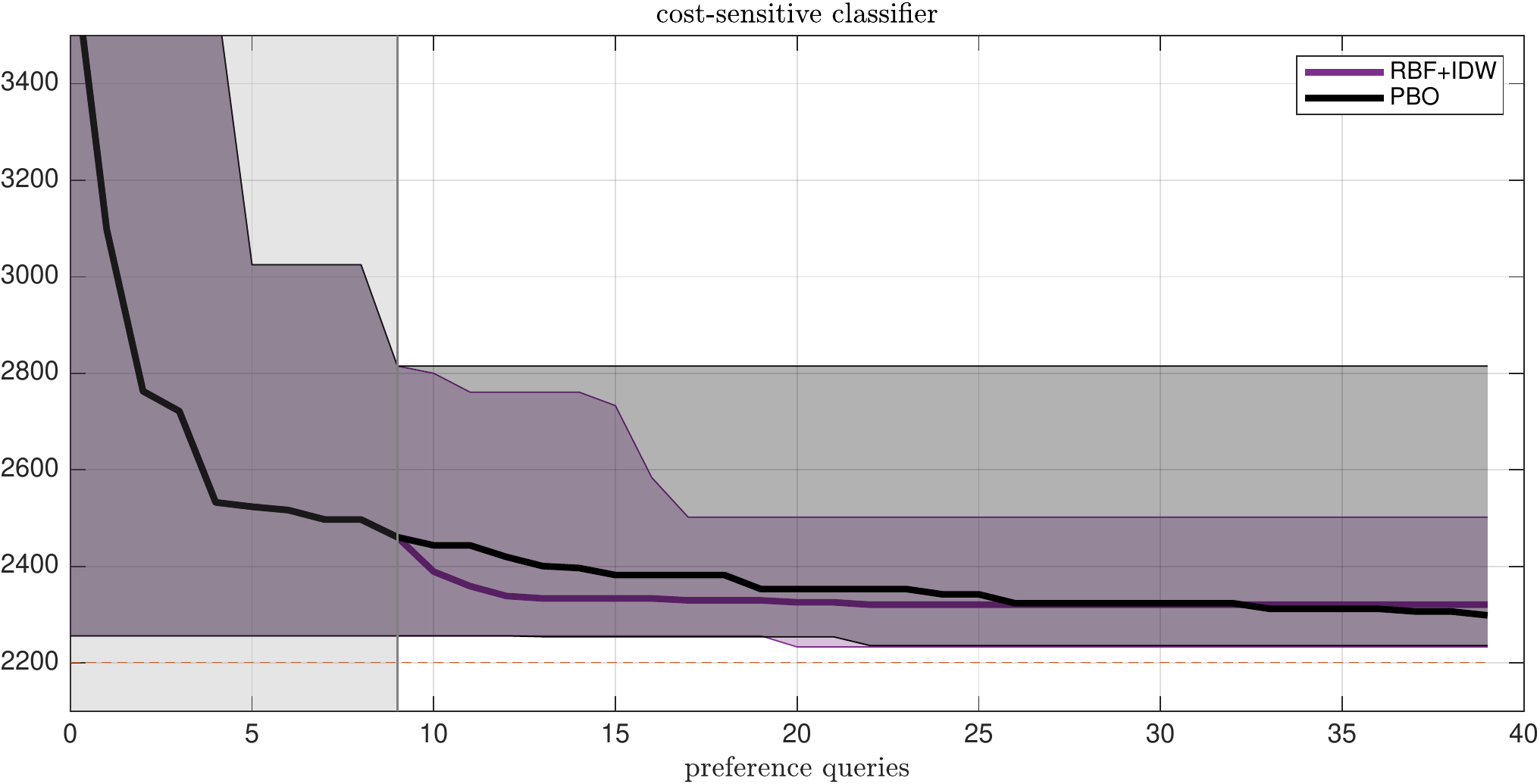}}\end{center}
    \caption{Noise-free cost $f(x,0)$ as a function of the number of queried preferences.  Median (solid lines) and bands defined by the best- and worst-case instances over $N_{\rm test}=30$; reference global optimum achieved by PSO (dashed red line). 
        .}%
    \label{fig:NN}
\end{figure} 

\begin{table}
    \begin{center}
        \begin{tabular}{|c|c|c|c|c|c|}
            \cline{3-6}
            \multicolumn{2}{c|}{} & \multicolumn{4}{c|}{predicted class} \\
            \cline{3-6}
            \multicolumn{2}{c|}{} & $\mathcal{C}_1$ & $\mathcal{C}_2$ & $\mathcal{C}_3$ & $\mathcal{C}_4$\\
            \hline
            \multirow{4}{*}{\begin{sideways} actual     \end{sideways}} \multirow{4}{*}{\begin{sideways} class  \end{sideways}} & $\mathcal{C}_1$ & 0 & 10 & 10 & 3 \\
            \cline{2-6}
            & $\mathcal{C}_3$ & 4 & 0 & 2 &  4 \\
            \cline{2-6}
            & $\mathcal{C}_3$ & 4 & 2 &0 &  4 \\
            \cline{2-6}
            & $\mathcal{C}_4$ & 3 & 10 & 10 & 0 \\
            \hline
        \end{tabular}
    \end{center}
    \caption{Cost matrix.}
    \label{Tab:cost}
\end{table}

\section{Conclusions}
\label{sec:conclusions}
In this paper we have proposed an algorithm for choosing the vector of decision variables
that is best in accordance with pairwise comparisons with all possible other values. Based
on the outcome of an incremental number of comparisons between given samples of the decision vector, the main idea is to attempt learning a latent cost function, using radial basis function interpolation, that, when compared at such samples, provides the same preference outcomes.
The algorithm actively learns such a surrogate function by proposing iteratively a new sample
to compare based on a trade-off between minimizing the surrogate and visiting
areas of the decision space that have not yet been explored. Through several numerical tests,
we have shown that the algorithm performs better than active preference learning
based on Bayesian optimization, in that it approaches the optimal decision vector with less
computations.

The approach can be extended in several directions. First, rather than only comparing the new sample $x_{N+1}$ with the current best $x^\star$, one could ask for expressing
preferences also with one or more of the other existing samples $x_1,\ldots,x_N$. Second,
the codomain of the comparison function $\pref(x,y)$ could be extended to say $\{-2,-1,0,1,2\}$ 
where $\pref(x,y)=\pm2$ means ``$x$ is much better/worse than $y$'', and then  extend~\eqref{eq:RBF-pref}
to include a much larger separation than $\sigma$ whenever the corresponding preference $\pref=\pm2$.
Third, often one can qualitatively assess whether a given sample $x$ is ``very good", ``good", ``neutral'', ``bad", or ``very bad", and take this additional information into account when learning the surrogate function, 
for example by including additional constraints that force the surrogate function to 
lie in $[0,0.2]$ on all ``very bad'' samples, in $[0.2,0.4]$ on all ``bad'' samples, \ldots, in $[0.8,1]$ on
all ``very good'' ones, and choosing an appropriate value of $\sigma$. Furthermore, while a certain tolerance
to errors in assessing preferences is built-in in the algorithm thanks to the use of slack variables in~\eqref{eq:RBF-pref}, the approach could be extended to better take evaluation errors into account in the overall formulation and solution method.

Finally, we remark that one should be careful in using the learned surrogate function to extrapolate preferences on arbitrary new pairs of decision vectors, as the learning process is tailored to detecting
the optimizer rather than globally approximating the unknown latent function and, moreover,
the chosen RBFs may not be adequate to reproduce the shape of the unknown latent function.

\section*{Acknowledgement}
The authors thank Luca Cecchetti for pointing out the literature references in psychology and neuroscience cited in the introduction of this paper.

\bibliographystyle{plain}
\bibliography{global_idw_pref}

\section*{Appendix}
\subsection*{Proof of Theorem~\ref{Th:eqprob}} 
Let $\beta(\lambda)$ be the minimizer of problem~\eqref{eqn:solveP} for some positive scalar  $\lambda$. Let us define  $\tau(\lambda)=\left\| \beta(\lambda) \right\|$ and the set $B_{\tau} = \{\beta \in \mathbb{R}^N: \|\beta \|=\tau(\lambda) \}$. Then, we have 
\begin{align*}
\beta(\lambda)   = &  \arg\min_{\beta \in \mathbb{R}^N} \sum_{h=1}^{M} c_{h} \ell_{b_{h}}(\Phi(\epsilon,X, x_{i(h)}, x_{j(h)})'\beta) + \frac{\lambda}{2} \left\| \beta \right\|^2 \\
= &  \arg\min_{\beta \in B_{\tau}}  \sum_{h=1}^{M} c_{h} \ell_{b_{h}}(\Phi(\epsilon,X, x_{i(h)}, x_{j(h)})'\beta)   \\
= &  \tau(\lambda)\arg\min_{u: \|u\|=1}  \sum_{h=1}^{M} c_{h} \ell_{b_{h}}(\tau(\lambda)\Phi(\epsilon,X, x_{i(h)}, x_{j(h)})'u) \\
= &  \tau(\lambda) u^\star
\end{align*}
where $u^\star$ is the minimizer of~\eqref{eqn:utau}. Thus, $u^\star=\frac{\beta(\lambda)}{\tau(\lambda)}$.  

\subsection*{Proof of Theorem~\ref{Th:int}}

Let $u, \bar u \in \mathbb{R}^N$ be arbitrary unit vectors. Then, there exists an orthogonal (rotation) matrix $R$ with determinant $+1$  such that $\bar u = R'u$. Let $\varphi: \mathbb{R}^N \rightarrow \mathbb{R}^N$ be a vector value function defined as $\varphi(v)=Rv$. Note that the the Jacobian matrix $J_{\varphi}$ of $\varphi$ is $R$, and thus its determinant $det(J_{\varphi})$ is equal to $+1$. 

Let us now write the integral 	$I_t(\bar c_t,\tau,u)$ in~\eqref{eqn:inty} as
\begin{subequations}
	\begin{align}
	I_t(\bar c_t,\tau,u) = & \int_{ \Phi \in \mathbb{R}^N}   e^{-\bar c_{t}\ell_{t}(\tau \Phi'u)}\kappa(\Phi)  d\Phi  \\
	= & \int_{ v \in \mathbb{R}^N}   e^{-\bar c_{t}\ell_{t}(\tau \varphi(v)'u)}  \kappa(\varphi(v)) det(J_{\varphi}) dv \\
	= &  \int_{ v \in \mathbb{R}^N}   e^{-\bar c_{t}\ell_{t}(\tau v'R'u)}  \kappa(v)   dv \label{eqn:P:Ib2} \\
		= &  \int_{ v \in \mathbb{R}^N}   e^{-\bar c_{t}\ell_{t}(\tau v'\bar u)}  \kappa(v)   dv \label{eqn:P:Ib3} \\
	= & 	I_t(\bar c_t,\tau,\bar u)  
	\end{align}
\end{subequations}
where~\eqref{eqn:P:Ib2} holds since $\kappa(\varphi(v))=e^{-\varphi(v)'\varphi(v)}=e^{-v'RR'v}=e^{-v'v} = \kappa(v)$. 

\end{document}